\def\eqref#1{equation~\ref{#1}}
\def\1{\bm{1}}
\DeclareMathAlphabet{\mathsfit}{\encodingdefault}{\sfdefault}{m}{sl}
\SetMathAlphabet{\mathsfit}{bold}{\encodingdefault}{\sfdefault}{bx}{n}
\DeclareMathOperator*{\argmin}{arg\,min}
\newtheorem{definition}{Definition}
\newtheorem{lemma}{Lemma}
\newtheorem{theorem}{Theorem}
\newtheorem{proofoflemma}{Proof of Lemma}
\newtheorem{proofoftheorem}{Proof of Theorem}
\title{
On Using Hamiltonian Monte Carlo Sampling for Reinforcement Learning Problems in High-dimension
}
\author{%
  Udari Madhushani$^{\dagger}$, Biswadip Dey$^{\S}$, Naomi Ehrich Leonard$^{\dagger}$, Amit Chakraborty$^{\S}$
  \\
  $^{\dagger}$ Princeton University | $^{\S}$ Siemens Corporation, Technology
  \\
  \texttt{udarim@princeton.edu}, \texttt{biswadip.dey@siemens.com}
  \\
  \texttt{naomi@princeton.edu}, \texttt{amit.chakraborty@siemens.com}
}
\begin{document}

\maketitle

\begin{abstract}
Value function based reinforcement learning (RL) algorithms, for example, $Q$-learning, learn optimal policies from datasets of actions, rewards, and state transitions. However, when the underlying state transition dynamics are stochastic and evolve on a high-dimensional space, generating independent and identically distributed (IID) data samples for creating these datasets poses a significant challenge due to the intractability of the associated normalizing integral. In these scenarios, Hamiltonian Monte Carlo (HMC) sampling offers a computationally tractable way to generate data for training RL algorithms. In this paper, we introduce a framework, called \textit{Hamiltonian $Q$-Learning}, that demonstrates, both theoretically and empirically, that $Q$ values can be learned from a dataset generated by HMC samples of actions, rewards, and state transitions. Furthermore, to exploit the underlying low-rank structure of the $Q$ function, Hamiltonian $Q$-Learning uses a matrix completion algorithm for reconstructing the updated $Q$ function from $Q$ value updates over a much smaller subset of state-action pairs. Thus, by providing an efficient way to apply $Q$-learning in stochastic, high-dimensional settings, the proposed approach broadens the scope of RL algorithms for real-world applications.
\end{abstract}
%
%
%

%
%
\section{Introduction}
In recent years, reinforcement learning has shown remarkable success with sequential decision-making tasks wherein an agent, after observing the current state of the environment, chooses an action to receive a reward, and subsequently, the environment transitions to a new state \cite{mnih2015human,sutton2018reinforcement}. RL has been applied to a variety of problems, such as automatic control \cite{duan2016benchmarking}, robotics \cite{kober2013reinforcement}, resource allocation \cite{mao2016resource}, and chemical process optimization \cite{zhou2017optimizing}. However, existing model-free RL approaches typically perform well only when the environment has been explored long enough, and the algorithm has used a large number of samples in the process \cite{kamthe2018data,yang2020data}. $Q$-learning is a model-free RL approach where an agent chooses its actions based on a policy defined by the state-action value function, i.e., the $Q$ function \cite{watkins1989learning,watkins1992q}. The performance of $Q$-learning algorithms depends strongly on the ability to access data samples, which can provide accurate estimates of the expected $Q$ values.

As these algorithms compute the expected $Q$ values by calculating the sample mean of $Q$ values over a set of IID samples, they assume access to a simulator that can generate IID samples according to the state transition probability. However, when the state transition probability distribution is high-dimensional, generating IID samples poses a significant challenge due to - (i) lack of closed-form solutions, and (ii) insufficiency of deterministic approximations, of the normalizing integral, preventing the utilization of existing RL methods. This motivated us to ask - \textit{How can we develop value function based RL methods when generating IID samples is impractical?} 

A crucial step in developing such methods is identifying means to draw samples from an unnormalized distribution. Importance sampling methods offer techniques to draw samples from a distribution without computing the corresponding normalizing integral. Hamilton Monte Carlo (HMC) sampling is one such method; it allows one to generate samples from the unnormalized state transition distribution \cite{neal2011mcmc}. Equipped with HMC, we attempt to answer the following question: \textit{How can we combine HMC sampling with $Q$-Learning to learn optimal policies for high-dimensional problems?}

In this work, we introduce \textit{Hamiltonian $Q$-Learning} to answer this question. We show that Hamiltonian $Q$-Learning can infer optimal policies even when it calculates the expected $Q$ values using HMC samples instead of IID samples. Now, even though HMC samples overcome the challenges associated with drawing IID samples in high-dimensions, a large number of samples is still needed to learn the $Q$ function because high-dimensional spaces often lead to a large number of state-action pairs. We address this issue by leveraging matrix completion techniques. It has been observed that formulating planning and control tasks in a variety of problems, such as video games (e.g., Atari games) and classical control problems (e.g., simple pendulum, cart pole) as $Q$-Learning problems leads to low-rank structures in the $Q$ matrix associated with the problem \cite{ong2015value,Yang2020Harnessing,shah2020sample}. Since these systems naturally consist of a large number of states, exploiting the low-rank structure in the $Q$ matrix in an informed way can enable further reduction in the computational complexity. \textit{Hamiltonian $Q$-Learning} uses matrix completion to reconstruct the $Q$ matrix from a small subset of expected $Q$ values making it data-efficient.

The three main contributions of this work are threefold. 
\textit{First}, we introduce a modified $Q$-learning framework, called \textit{Hamiltonian $Q$-learning}, which uses HMC sampling for efficient computation of the $Q$ values. This innovation, by proposing to sample $Q$ values from the region with the dominant contribution to the expectation of discounted reward, provides a data-efficient approach for using $Q$-learning in real-world problems with high-dimensional state space and probabilistic state transition. Integration of this sampling approach with matrix-completion enables us to update $Q$ values for only a small subset of state-action pairs and reconstruct the complete $Q$ matrix.
\textit{Second}, we provide theoretical guarantees that the error between the optimal $Q$ function and the $Q$ function computed by updating $Q$ values using HMC sampling can be made arbitrarily small. This result holds even when only a small fraction of the $Q$ values are updated using HMC samples and the rest are estimated using matrix completion. We also provide theoretical guarantee that the sampling complexity of our algorithm matches the mini-max sampling complexity proposed by \cite{tsybakov2008introduction}.
\textit{Finally}, we apply Hamiltonian $Q$-learning to a high-dimensional problem (in particular, the problem of stabilizing a double pendulum on a cart) as well as to benchmark control tasks (inverted pendulum, double integrator, cartpole, and acrobot). Our results show that the proposed approach becomes more effective with increase in state space dimension.

\paragraph{Related Work:}
The last decade has witnessed a growing interest in improving sample efficiency in RL methods by exploiting emergent global structures from underlying system dynamics. \cite{kamthe2018data, deisenroth2011pilco, pan2014probabilistic, buckman2018sample} have proposed model-based RL methods that improve sample efficiency by explicitly incorporating prior knowledge about state transition dynamics of the underlying system. \cite{dearden1998bayesian, koppel2018nonparametric, jeong2017assumed} propose Baysean methods to approximate the $Q$ function. \cite{ong2015value,Yang2020Harnessing} consider a model-free RL approach that exploit structures of state-action value function. The work by \cite{ong2015value} decomposes the $Q$ matrix into a low-rank and sparse matrix model and uses matrix completion methods \cite{candes2010matrix, wen2012solving, chen2018harnessing} to improve sample efficiency. A more recent work \cite{Yang2020Harnessing} has shown that incorporating low rank matrix completion methods to recover $Q$ matrix from a small subset of $Q$ values can improve learning of optimal policies. At each time step the agent chooses a subset of state-action pairs and update the corresponding $Q$ value using the Bellman optimally equation that considers a discounted average between reward and expectation of the $Q$ values of next states. \cite{shah2020sample} extends this work by proposing a novel matrix estimation method and providing theoretical guarantees for the convergence to a $\epsilon$-optimal $Q$ function. On the other hand, entropy regularization techniques penalize excessive randomness in the conditional distribution of actions for a given state and provide an alternative means to implicitly exploit the underlying low-dimensional structure of the value function \cite{ahmed2019understanding,yang2019regularized,smirnova2020convergence}. \cite{lee2019sample} has proposed an approach that samples a whole episode and then updates values in a recursive, backward manner.

%
%
\section{Preliminary Concepts} 
In this section, we provide a brief background on $Q$-Learning, HMC sampling and matrix completion, as well as introduce the mathematical notations. In this paper, $|\EuScript{Z}|$ denotes the cardinality of a set $\EuScript{Z}$. Moreover, $\mathds{R}$ represent the real line and $A^T$ denotes the transpose of matrix $A$.
\subsection{$Q$-Learning}
Markov Decision Process (MDP) is a mathematical formulation that captures salient features of sequential decision making \cite{bertsekas1995dynamic}. In particular, a \textit{finite MDP} is defined by the tuple $(\mathcal{S},\mathcal{A},\mathbb{P},r,\gamma)$, where $\mathcal{S}$ is the finite set of system states, $\mathcal{A}$ is the finite set of actions, $\mathbb{P}: \mathcal{S}\times \mathcal{A}\times \mathcal{S} \to [0,1]$ is the transition probability kernel, $r:\mathcal{S}\times \mathcal{A}\to \mathds{R}$ is a bounded reward function, and $\gamma \in [0,1)$ is a discounting factor. Without loss of generality, states $s\in \mathcal{S}$ and actions $a\in\mathcal{A}$ can be assumed to be $\EuScript{D}_s$-dimensional and $\EuScript{D}_a$-dimensional real vectors, respectively. Moreover, by letting $s^i$ denote the $i$th element of a state vector, we define the range of state space in terms of the following intervals $[d_i^-,d_i^+]$ such that $s^i\in [d_i^-,d_i^+]$ $\forall i\in \{1,\ldots,\EuScript{D}_s\}$. At each time $t\in \{1,\ldots, T\}$ over the decision making horizon, an agent observes the state of the environment $s_t\in\mathcal{S}$ and takes an action $a_t$ according to some policy $\pi$ which maximizes the discounted cumulative reward. Once this action has been executed, the agent receives a reward $r(s_t,a_t)$ from the environment and the state of the environment changes to $s_{t+1}$ according to the transition probability kernel $\mathbb{P}\left(\cdot |s_t,a_t\right)$. The $Q$ function, which represents the expected discounted reward for taking a specific action at the current time and following the policy thereafter, is defined as a mapping from the space of state-action pairs to the real line, i.e. $Q:\mathcal{S}\times \mathcal{A}\to \mathds{R}$. Then, by letting $Q^t$ represent the $Q$ matrix at time $t$, i.e. the tabulation of $Q$ function over all possible state-action pairs associated with the finite MDP, we can express the $Q$ value iteration over time steps as
\begin{align}
& Q^{t+1}(s_t,a_t)
=
\sum_{s\in \mathcal{S}}\mathbb{P}\left(s | s_t,a_t\right)\left(r(s_t,a_t)+\gamma\max _{a}Q^t(s,a)\right).
\label{eq:bellman}
\end{align}
Under this update rule, the $Q$ function converges to its optimal value $Q^*$ \cite{melo2001convergence}. To compute this sum (\ref{eq:bellman}) over possible next states, existing methods rely on either exhaustive sampling or a simulator generating IID samples. However they fail in high-dimensional spaces due to prohibitively high computational cost associated with calculating the normalizing integral of state transition distribution.
\subsection{Hamiltonian Monte Carlo}
\label{subsec:HMC}
Hamiltonian Monte Carlo is an efficient sampling approach for drawing samples from probability distributions known up to a constant, i.e., unnormalized distributions. It offers faster convergence than Markov Chain Monte Carlo (MCMC) sampling \cite{neal2011mcmc, betancourt2017conceptual, betancourt2017geometric,neklyudov2020involutive}. To draw samples from a smooth target distribution $\mathcal{P}(s)$, which is defined on the Euclidean space and assumed to be known up to a constant, HMC extends the target distribution to a joint distribution over the target variable $s$ (viewed as position within the HMC context) and an auxiliary variable $v$ (viewed as momentum within the HMC context). We define the Hamiltonian of the system as
$
H(s,v)
= - \log\mathcal{P}(s,v)
= - \log\mathcal{P}(s) - \log\mathcal{P}(v|s)
= U(s) + K(v,s),
$
where $U(s) \triangleq -\log \mathcal{P}(s)$ and $K(v,s) \triangleq -\log\mathcal{P}(v|s) = \frac{1}{2}v^TM^{-1}v$ represent the potential and kinetic energy, respectively, and $M$ is a suitable choice of the mass matrix.

HMC sampling method consists of the following \textit{three} steps $-$ (i) a new momentum variable  $v$ is drawn from a fixed probability distribution, typically a multivariate Gaussian; (ii) then a new proposal $(s^{\prime},v^{\prime})$ is obtained by generating a trajectory that starts from $(s,v)$ and obeys Hamiltonian dynamics, i.e. $\dot{s} = \frac{\partial H}{\partial v}, \dot{v} = -\frac{\partial H}{\partial s}$; and (iii) finally this new proposal is accepted with probability $\min \left\{1, \exp\left(H(s,v)-H(s^{\prime},-v^{\prime})\right)\right\}$ following the Metropolis–Hastings acceptance/rejection rule.

Thus HMC sampling offers a way to draw samples from unnormalized transition distributions often encountered in high-dimensional state spaces. However, since such problems often consist of a large number of state-action pairs, learning the $Q$ function still requires a large number of samples. This leads to poor sample efficiency.
\subsection{Low-rank Structure in $Q$-learning and Matrix Completion}
When a matrix is low-rank or has a sparse structure, matrix completion methods can reconstruct it accurately from a small subset of entries. Prior work \cite{johns2007constructing, geist2013algorithmic,ong2015value,shah2020sample} on value function approximation based approaches for RL has implicitly assumed that the state-action value functions are low-dimensional and used various basis functions to represent them, e.g. CMAC, radial basis function, etc. This can be attributed to the fact that the underlying state transition and reward function are often endowed with some structure. More recently, \cite{Yang2020Harnessing} provide empirical guarantees that the $Q$-matrices for benchmark Atari games and classical control tasks exhibit low-rank structure. 

Therefore, using matrix completion techniques \cite{xu2013parallel,chen2018harnessing} to recover $Q \in \mathds{R}^{|\mathcal{S}|\times|\mathcal{A}|}$ from few observed $Q$ values constitutes a viable approach towards improving sample efficiency. As low-rank matrix structures can be recovered by constraining the nuclear norm (i.e., the sum of its singular values), the $Q$ matrix can be reconstructed from its observed values ($\hat{Q}$) by solving
\begin{equation}
\begin{aligned}
{Q} = \argmin_{\widetilde{Q}\in \mathds{R}^{|\mathcal{S}|\times|\mathcal{A}|}} \quad & \|\widetilde{Q}\|_*
\\
\textrm{subject to} \quad & \EuScript{J}_{\Omega}(\widetilde{Q})=\EuScript{J}_{\Omega}(\hat{Q})
\end{aligned}
\label{eq:matrxComp}
\end{equation}
where $\|\cdot\|_*$ denotes the nuclear norm, $\Omega$ is the observed set of elements, and $\EuScript{J}_{\Omega}$ is the observation operator, i.e. $\EuScript{J}_{\Omega}(x)=x$ if $x\in \Omega$ and zero otherwise.
%
%
\section{Hamiltonian $Q$-Learning}
A large class of real world sequential decision making problems - for example, board/video games, control of a robot's movement, and portfolio optimization - involves high-dimensional state spaces and often has large number of distinct states along each individual dimension. As using a $Q$-Learning based approach to train RL-agents for these problems typically requires tens to hundreds of millions of samples \cite{mnih2015human, silver2017mastering}, there is a strong need for sample efficient algorithms for $Q$-Learning. In addition, state transition in such systems is often probabilistic in nature; even when the underlying dynamics of the system is inherently deterministic; presence of external disturbances and parameter variations/uncertainties lead to probabilistic state transitions.

Learning an optimal $Q^*$ function through value iteration methods requires updating $Q$ values of state-action pairs using a sum of the reward and a discounted expectation of $Q$ values associated with next states. In this work, we assume the reward to be a deterministic function of state-action pairs. However, when the reward is stochastic, these results can be extended by replacing the reward with its expectation. Subsequently, we can express (\ref{eq:bellman}) as
\begin{align}
Q^{t+1}(s_t,a_t)=r(s_t,a_t)+\gamma\mathbb{E}\left(\max _{a}Q^t(s,a)\right),
\label{eq:bellUpdate} 
\end{align}
where $\mathbb{E}$ denotes the expectation over the discrete probability measure $\mathbb{P}$. When the underlying state space is high-dimensional and has large number of states, we encounter two key challenges while attempting to learn the $Q$ function: (\textit{i}) difficulty in estimating the expectation in (\ref{eq:bellUpdate}) due to high computational cost of exhaustive sampling and impracticality of generating IID samples; and (\textit{ii}) a sample complexity that increases quadratically with the number of states and linearly with the number of actions.
%

To the best of our knowledge, \textit{Hamiltonian $Q$-Learning} offers the first solution to this problem by combining \textit{HMC sampling} and \textit{matrix completion} that overcome the first and the second challenge, respectively.
%
%
%
%
%

\subsection{HMC sampling for learning $Q$ function}
%
%
A number of importance-sampling methods \cite{liu1996metropolized, betancourt2017conceptual} have been developed for estimating the expectation of a function by drawing samples from the region with the dominant contribution to the expectation. HMC is one such importance-sampling method that draws samples from the typical set, i.e., the region that maximizes probability mass, which provides the dominated contribution to the expectation. Since the decay in $Q$ function is significantly smaller compared to the typical exponential or power law decays in transition probability function, HMC provides a better approximation for the expectation of the $Q$ value of the next states \cite{Yang2020Harnessing,shah2020sample}. Then by letting $\mathcal{H}_t$ denote the set of HMC samples drawn at time step $t$, we update the $Q$ values as:
\begin{align}
Q^{t+1}(s_t,a_t)=r(s_t,a_t)+\frac{\gamma}{|\mathcal{H}_t|}\sum_{s\in \mathcal{H}_t}\max _{a}Q^t(s,a).
\label{eq:HMCUpdate}
\end{align}
%
%

\paragraph{HMC for a smooth truncated target distribution:}
%
Recall that region of states is a subset of a Euclidean space given as $s\in [d_1^-,d_1^+]\times \ldots \times [d_{\EuScript{D}_s}^-,d_{\EuScript{D}_s}^+]\subset \mathds{R}^{\EuScript{D}_s}$. Thus the main challenge to using HMC sampling is to define a smooth continuous target distribution $\mathcal{P}(s|s_t,a_t)$ which is defined on $\mathds{R}^{\EuScript{D}_s}$ with a sharp decay at the boundary of the region of states \cite{yi2017roll, chevallier2018hamiltonian}. In this work, we generate the target distribution by first defining the transition probability kernel from the conditional probability distribution defined on $\mathds{R}^{\EuScript{D}_s}$ and then multiplying it with a smooth cut-off function.

We first consider a probability distribution $\EuScript{P}(\cdot |s_t,a_t):\mathds{R}^{\EuScript{D}_s}\to \mathds{R}$ such that the following holds
\begin{align}
\mathbb{P}(s|s_t,a_t) \propto \int_{s-\varepsilon}^{s+\varepsilon} \EuScript{P}(s |s_t,a_t)ds \label{eq:probDis}
\end{align}
for some arbitrarily small $\varepsilon>0$. Then the target distribution can be defined as
\begin{align}
\mathcal{P}(s|s_t,a_t)=\EuScript{P}(s |s_t,a_t)&\prod_{i=1}^{\EuScript{D}_s}\left[\frac{1}{1+\exp(-\kappa (d^+_i-s^i))} \cdot \frac{1}{1+\exp(-\kappa (s^i-d^-_i))}\right ].\label{eq:targetDis}
\end{align}
Note that there exists a large $\kappa>0$ such that if $s\in [d_1^-,d_1^+]\times \ldots \times [d_{\EuScript{D}_s}^-,d_{\EuScript{D}_s}^+]$ then $\mathcal{P}(s|s_t,a_t)\propto \mathbb{P}(s|s_t,a_t)$ and $\mathcal{P}(s|s_t,a_t)\approx 0$ otherwise. Let $\mu(s_t,a_t), \Sigma(s_t,a_t)$ be the mean and covariance  of the transition probability kernel. In this paper we consider transition probability kernels of the form
\begin{align}
&\mathbb{P}(s|s_t,a_t)
\propto
\exp\left(-\frac{1}{2}(s-\mu(s_t,a_t))^T\Sigma^{-1}(s_t,a_t)(s-\mu(s_t,a_t))\right).
\label{eq:stateTrans}
\end{align}
Then from (\ref{eq:probDis}) the corresponding mapping can be given as a multivariate Gaussian $\EuScript{P}(s |s_t,a_t)=\mathcal{N}(\mathbf{\mu}\left(s_t,a_t),{\Sigma}(s_t,a_t)\right).$ Thus from (\ref{eq:targetDis}) it follows that the target distribution is
\begin{align}
&
\mathcal{P}(s|s_t,a_t)=\mathcal{N}(\mathbf{\mu}\left(s_t,a_t),{\Sigma}(s_t,a_t)\right)
\prod_{i=1}^{\EuScript{D}_s}\frac{1}{1+\exp(-\kappa (d^+_i-s^i))}\frac{1}{1+\exp(-\kappa (s^i-d^-_i))}.
\label{eq:targetDisNormal}
\end{align}
\paragraph{Choice of potential energy, kinetic energy and mass matrix:}
For brevity of notation we drop the explicit dependence of $\mathcal{P}(\cdot)$ on $(s_t,a_t)$ and denote the target distribution as $\mathcal{P}(s)$ defined over the Euclidean space $\mathds{R}^{\mathcal{D}_s}$. As explained in Section \ref{subsec:HMC} we choose the potential energy as
\begin{align*}
U(s) 
= 
-\log(\mathcal{P}(s)) 
=& 
\frac{1}{2}(s-\mu)^T\Sigma^{-1}(s-\mu) - \frac{1}{2}\log\Big((2\pi)^{D_s}\det(\Sigma)\Big) 
\\
&\quad 
- \sum_{i=1}^{D_s}\left[\log\Big(1+\exp(-\kappa (d^+_i-s^i))\Big) + \log\Big(1+\exp(-\kappa (s^i-d^-_i))\Big)\right].
\end{align*}
We consider an Euclidean metric  $\EuScript{M}$ that induces the distance between $\tilde{s},\bar{s}$ as $d(\tilde{s},\bar{s}) = (\tilde{s}-\bar{s})^T \EuScript{M} (\tilde{s}-\bar{s})$. Then we define $\EuScript{M}_s \in \mathds{R}^{\mathcal{D}_s\times \mathcal{D}_s}$ as a diagonal scaling matrix and $\EuScript{M}_r \in \mathds{R}^{\mathcal{D}_s\times \mathcal{D}_s}$ as a rotation matrix in dimension $\mathcal{D}_s$. With this we can define $M$ as $M = \EuScript{M}_r \EuScript{M}_s \EuScript{M} \EuScript{M}_s^T \EuScript{M}_r^T$. Thus, any metric $M$ that defines an Euclidean structure on the target variable space induces an inverse structure $d(\tilde{v},\bar{v}) = (\tilde{v}-\bar{v})^TM^{-1}(\tilde{v}-\bar{v})$ on the momentum variable space. This generates a natural family of multivariate Guassian distributions such that $\mathcal{P}(v|s)= \mathcal{N}(0,M)$ leading to the kinetic energy $K(v,s) = -\log \mathcal{P}(v|s) = \frac{1}{2}v^TM^{-1}v$ where $M^{-1}$ is the covariance of the target distribution. 
%
%
\subsection{$Q$-Learning with HMC and matrix completion}
In this work we consider problems with a high-dimensional state space and large number of distinct states along individual dimensions. Although these problems admit a large $Q$ matrix, we can exploit low rank structure of the $Q$ matrix to further improve the sample efficiency.

At each time step $t$ we randomly sample a subset $\Omega_t$ of state-action pairs (each state-action pair is sampled independently with some probability $p$) and update the $Q$ function for state-action pairs in $\Omega_t$. Let $\widehat{Q}^{t+1}$ be the updated $Q$ matrix at time $t$. Then from (\ref{eq:HMCUpdate}) we have
\begin{align}
\widehat{Q}^{t+1}(s_t,a_t)=r(s_t,a_t)+\frac{\gamma}{|\mathcal{H}_t|}\sum_{s\in \mathcal{H}_t}\max _{a}Q^t(s,a), 
\label{eq:HMCSub}
\end{align}
for any $(s_t,a_t) \in \Omega_t$. Then we recover the complete matrix $Q^{t+1}$ by using the  method given in (\ref{eq:matrxComp}). Thus we have
\begin{equation}
\begin{aligned}
Q^{t+1} 
= \argmin_{\widetilde{Q}^{t+1}\in \mathds{R}^{|\mathcal{S}|\times |\mathcal{A|}}} 
\; & 
\|\widetilde{Q}^{t+1}\|_*
\\
\textrm{subject to} 
\; & 
\EuScript{J}_{\Omega_t}\left(\widetilde{Q}^{t+1}\right)=\EuScript{J}_{\Omega_t}\left(\widehat{Q}^{t+1}\right)
\end{aligned}
\label{eq:HMCMC}
\end{equation}
Similar to the approach used by \cite{Yang2020Harnessing}, we approximate the rank of the $Q$ matrix as the minimum number of singular values that are needed to capture 99\% of its nuclear norm.
%
\setlength{\textfloatsep}{2pt}
\begin{algorithm}[t!]
    \caption{Hamiltonian $Q$-Learning}
    \begin{algorithmic}\label{alg:HMCQAlg}
      \STATE {\bfseries Inputs:} Discount factor $\gamma$; Range of state space; Time horizon $T$;
      \\
      \STATE {\bfseries Initialization:} Randomly initialize $Q^{0}$\\
      \FOR{$t=1$ {\bfseries to} $T$}
      \STATE \textbf{Step 1}: Randomly sample a subset of state-action pairs $\Omega_t$ \\
      \STATE \textbf{Step 2}: \textbf{HMC sampling phase} - Sample a set of next states $\mathcal{H}_t$ according to the target distribution defined in (\ref{eq:targetDis}) \\
      \STATE \textbf{Step 3}: \textbf{Update phase} -  For all $(s_t,a_t)\in \Omega_t$\\ 
    	$\displaystyle \widehat{Q}^{t+1}(s_t,a_t)=r(s_t,a_t)+\frac{\gamma}{|\mathcal{H}_t|}\sum_{s\in \mathcal{H}_t}\max_a Q^t(s,a)$
    	\STATE \textbf{Step 4}: {\bf{Matrix Completion phase }} \\
    	$\displaystyle \begin{aligned} Q^{t+1} = \argmin_{\widetilde{Q}^{t+1}\in \mathds{R}^{|\mathcal{S}|\times |\mathcal{A|}}} \; & \|\widetilde{Q}^{t+1}\|_* \\ \textrm{subject to} \; & \EuScript{J}_{\Omega_t}\left(\widetilde{Q}^{t+1}\right)=\EuScript{J}_{\Omega_t}\left(\widehat{Q}^{t+1}\right) \end{aligned}$
      \ENDFOR
    \end{algorithmic}
\end{algorithm}
\vspace{1em}
\subsection{Convergence, Boundedness and Sampling Complexity}
In this section we provide the main theoretical results of this paper. First, we formally introduce the
following \textit{regularity assumptions}:\\
\indent(\textbf{A1}) 
The state space $\mathcal{S} \subseteq \mathds{R}^{\mathcal{D}_s}$ and the action space $\mathcal{A} \subseteq \mathds{R}^{\mathcal{D}_a}$ are compact subsets.
\\
\indent(\textbf{A2}) 
The reward function is bounded, i.e., $r(s,a)\in [R_{\min},R_{\max}]$ for all $(s,a)\in \mathcal{S}\times \mathcal{A}$.
\\
\indent(\textbf{A3}) 
The optimal value function $Q^*$ is $C$-Lipschitz, i.e.
\begin{align*}
\Big | Q^*(s,a)-Q^*(s^{\prime},a^{\prime})\Big |
\leq 
C \Big( ||s-s^{\prime}||_{F}+||a-a^{\prime}||_{F}\Big)
\end{align*}
where $||\cdot||_F$ is the Frobenius norm (which is same as the Euclidean norm for vectors).

We provide theoretical guarantees that Hamiltonian $Q$-Learning converges to an $\epsilon$-optimal $Q$ function with $\widetilde{O}\left(\frac{1}{\epsilon^{\mathcal{D}_s+\mathcal{D}_a+2}}\right)$ number of samples. This matches the mini-max lower bound $\Omega\left(\frac{1}{\epsilon^{\mathcal{D}_s+\mathcal{D}_a+2}}\right)$ proposed in \cite{tsybakov2008introduction}. First we define a family of $\epsilon$-optimal $Q$ functions as follows.

\begin{definition}[\textbf{$\epsilon$-optimal $Q$ functions}]
Let $Q^*$ be the unique fixed point of the Bellman optimality equation given as $(\mathcal{T}Q)(s^{\prime},a^{\prime})=\sum_{s\in \mathcal{S}}\mathbb{P}(s|s^{\prime},a^{\prime})\left(r(s^{\prime},a^{\prime})+\gamma\max_a Q(s,a) \right) \:\:\forall (s^{\prime},a^{\prime}) \in \mathcal{S}\times \mathcal{A}$ where $\mathcal{T}$ denotes the Bellman operator. Then, under update rule (\ref{eq:bellUpdate}), the $Q$ function almost surely converges to the optimal $Q^*$. We define $\epsilon$-optimal $Q$ functions as the family of functions $\mathbf{Q_{\epsilon}}$ such that $\|Q^{\prime}-Q^*\|_{\infty}\leq \epsilon$ whenever $Q^{\prime}\in \mathbf{Q_{\epsilon}}$.
\end{definition}
As $\|Q^{\prime}-Q^*\|_{\infty}=\max_{(s,a)\in\mathcal{S}\times \mathcal{A}}\|Q^{\prime}(s,a)-Q^*(s,a)\|$, any $\epsilon$-optimal $Q$ function is element wise $\epsilon$-optimal. Our next result shows that under HMC sampling rule given in Step 3 of the Hamiltonian $Q$-Learning algorithm (Algorithm \ref{alg:HMCQAlg}), the $Q$ function converges to the family of $\epsilon$-optimal $Q$ functions.
\begin{theorem}[\textbf{Convergence of $Q$ function under HMC}]
\label{thm:converge}
Let $\EuScript{T}$ be an optimality operator under HMC given as $(\EuScript{T}Q)(s^{\prime},a^{\prime}) = r(s^{\prime},a^{\prime}) + \frac{\gamma}{|\mathcal{H}|}\sum_{s\in \mathcal{H}}\max _{a}Q(s,a), \:\:\forall (s^{\prime},a^{\prime})\in \mathcal{S}\times \mathcal{A},$ where $\mathcal{H}$ is a subset of next states sampled using HMC from the target distribution given in (\ref{eq:targetDis}). Then, under update rule (\ref{eq:HMCUpdate}) and for any given $\epsilon \geq 0$, there exists $n_{\mathcal{H}},t^{\prime}>0$ such that $\|Q^{t}-Q^*\|_{\infty}\leq \epsilon$ $\forall t\geq t^{\prime}$.
\end{theorem}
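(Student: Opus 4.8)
The plan is to treat the HMC operator $\EuScript{T}$ as a uniformly small random perturbation of the exact Bellman operator $\mathcal{T}$ and then propagate the one-step error through the contraction property of $\mathcal{T}$. First I would recall the standard fact (underlying the cited convergence result of \citep{melo2001convergence}) that $\mathcal{T}$ is a $\gamma$-contraction in the $\|\cdot\|_{\infty}$ norm with unique fixed point $Q^*$, so that $\|\mathcal{T}Q-\mathcal{T}Q'\|_{\infty}\leq \gamma\|Q-Q'\|_{\infty}$ and $\mathcal{T}Q^*=Q^*$. Since the update rule \eqref{eq:HMCUpdate} is exactly $Q^{t+1}=\EuScript{T}Q^t$, the argument reduces to two tasks: (i) controlling the one-step discrepancy $\delta:=\sup_{Q}\|\EuScript{T}Q-\mathcal{T}Q\|_{\infty}$ over the relevant class of bounded $Q$, and (ii) unrolling the resulting error recursion.

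For task (i), I would fix a state-action pair $(s',a')$ and write the discrepancy as
\[
\big|(\EuScript{T}Q)(s',a')-(\mathcal{T}Q)(s',a')\big| = \gamma\,\Big|\tfrac{1}{|\mathcal{H}|}\sum_{s\in\mathcal{H}}\max_a Q(s,a)-\mathbb{E}_{\mathbb{P}(\cdot|s',a')}\big[\max_a Q(s,a)\big]\Big|,
\]
and split the right-hand side into the Monte-Carlo/ergodic error between the HMC empirical average and the expectation under the \emph{target} distribution $\mathcal{P}(\cdot\,|s',a')$, plus the bias between $\mathcal{P}$ and $\mathbb{P}$. The bias term is governed by the smooth cut-off construction \eqref{eq:targetDis}--\eqref{eq:targetDisNormal}: for $\kappa$ large, $\mathcal{P}\propto\mathbb{P}$ inside the state region and $\mathcal{P}\approx 0$ outside, so it is negligible. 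For the sampling term I would use that, by (\textbf{A2}) and $\gamma<1$, the iterates $Q^t$ remain uniformly bounded (from $\|\EuScript{T}Q\|_{\infty}\leq R_{\max}+\gamma\|Q\|_{\infty}$), so that $g(s):=\max_a Q(s,a)$ is a bounded and, via (\textbf{A3}), Lipschitz test function; ergodicity of the HMC chain then yields $\tfrac{1}{|\mathcal{H}|}\sum_{s\in\mathcal{H}}g(s)\to\mathbb{E}_{\mathcal{P}}[g]$ as $|\mathcal{H}|=n_{\mathcal{H}}\to\infty$. Combining the two pieces gives, for any desired tolerance, a choice of $n_{\mathcal{H}}$ making $\delta$ arbitrarily small.

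For task (ii), I would combine the triangle inequality with the contraction and fixed-point property:
\[
\|Q^{t+1}-Q^*\|_{\infty} = \|\EuScript{T}Q^t-Q^*\|_{\infty}\leq \|\EuScript{T}Q^t-\mathcal{T}Q^t\|_{\infty}+\|\mathcal{T}Q^t-\mathcal{T}Q^*\|_{\infty}\leq \delta+\gamma\|Q^t-Q^*\|_{\infty}.
\]
Unrolling this recursion gives $\|Q^t-Q^*\|_{\infty}\leq \gamma^t\|Q^0-Q^*\|_{\infty}+\delta/(1-\gamma)$. Choosing $n_{\mathcal{H}}$ so that $\delta\leq (1-\gamma)\epsilon/2$ and then $t'$ so that $\gamma^{t'}\|Q^0-Q^*\|_{\infty}\leq \epsilon/2$ yields $\|Q^t-Q^*\|_{\infty}\leq\epsilon$ for all $t\geq t'$, which is the claim (for $\epsilon>0$; the $\epsilon=0$ case is vacuous for finite samples).

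The main obstacle I anticipate is making the one-step bound $\delta$ genuinely \emph{uniform} and quantitative. HMC produces correlated, non-IID samples from the truncated surrogate $\mathcal{P}$ rather than exact draws from $\mathbb{P}$, so the plain law of large numbers must be upgraded to an ergodic-theorem or geometric-ergodicity argument that furnishes a concentration rate in $n_{\mathcal{H}}$ (and, since the chain is random, holds almost surely or with high probability). Moreover the bound must hold simultaneously for every $Q^t$ along the trajectory and every $(s',a')$: uniformity over state-action pairs follows from finiteness of $\mathcal{S}\times\mathcal{A}$ (or, in the continuous embedding, from compactness (\textbf{A1}) plus a covering argument), while uniformity over the function class is handled by the uniform boundedness of the iterates together with the Lipschitz structure (\textbf{A3}). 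Turning the asymptotic convergence of the HMC average into an explicit, uniform rate is the delicate technical point.
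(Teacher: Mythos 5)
Your plan is correct, and at the architectural level it matches the paper's own proof: both treat the HMC update as a perturbed Bellman iteration and combine an HMC concentration estimate with the $\gamma$-contraction via the triangle inequality. The decompositions differ, though, and yours is the stronger one. The paper bounds the pairwise difference $\|\EuScript{T}Q_1-\EuScript{T}Q_2\|_{\infty}\leq 2\gamma c_0+\gamma\|Q_1-Q_2\|_{\infty}$, where $c_0$ is a \emph{fixed} constant imported from a curvature/concentration theorem for HMC, and then concludes abruptly via the crude bound $\|Q_1-Q_2\|_{\infty}\leq \frac{\gamma}{1-\gamma}R_{\max}-R_{\min}$ and a "choice of $\gamma$" (even though $\gamma$ is given, not a free parameter). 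You instead bound the operator discrepancy $\delta=\sup_Q\|\EuScript{T}Q-\mathcal{T}Q\|_{\infty}$, split it into the ergodic sampling error plus the $\mathcal{P}$-versus-$\mathbb{P}$ truncation bias (a term the paper never addresses), and unroll the recursion $\|Q^{t+1}-Q^*\|_{\infty}\leq \delta+\gamma\|Q^{t}-Q^*\|_{\infty}$ to exhibit explicit choices of $n_{\mathcal{H}}$ and $t^{\prime}$. This buys two things the paper's writeup lacks: first, your error floor $\delta/(1-\gamma)$ is explicitly driven to zero by increasing $n_{\mathcal{H}}$, whereas the paper's $c_0$ has no stated dependence on $n_{\mathcal{H}}$, so as written its bound can never be made smaller than a fixed quantity and the "$\forall \epsilon$" conclusion does not actually follow; second, your endgame legitimately tunes $t^{\prime}$ and $n_{\mathcal{H}}$ rather than the given discount factor. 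The delicate point you flag---a quantitative concentration rate for correlated HMC samples, holding uniformly over $(t,s^{\prime},a^{\prime})$ and over the iterates---is real, and it is essentially what the paper defers to its later Lemma \ref{lem:bound} (a spectral-gap Hoeffding bound for Markov chains with a union bound over $\Omega_t$ and $T$); invoking such a bound, together with the high-probability (rather than deterministic) reading of the conclusion that you already note, would close your task (i) and complete the argument.
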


\begin{proof} (\textit{sketch})
We follow a similar approach to $Q$-function convergence proof, i.e. convergence under exhaustive sampling, with a key modification that accounts for the error incurred by HMC sampling. We notice that $Q$-function error under HMC sampling can be upper bounded by the summation of (\textit{i}) $Q$-function error under exhaustive sampling and (\textit{ii}) the error between empirical average under HMC sampling and expectation under exhaustive sampling. We note that when $Q$-function is Lipschitz from central limit theorem for HMC sampling we can upper bound the cumulative error induced by the second term using a constant. Please refer the Supplementary Material for a detailed proof of this theorem.
\end{proof}

 The next theorem shows that the $Q$ matrix estimated via a suitable matrix completion technique lies in the $\epsilon$-neighborhood of the corresponding $Q$ function obtained via exhaustive sampling.

\begin{theorem}[\textbf{Bounded Error under HMC with Matrix Completion}]
\label{thm:convergeMat}
Let $Q^{t+1}_{\mathcal{E}}(s_t,a_t) = r(s_t,a_t) + \gamma\sum_{s\in \mathcal{S}}\mathbb{P}(s|s_t,a_t)\max _{a}Q_{\mathcal{E}}^t(s,a), \forall (s_t,a_t)\in \mathcal{S}\times \mathcal{A}$ be the update rule under exhaustive sampling, and $Q^t$ be the $Q$ function updated according to Hamiltonian $Q$-Learning (\ref{eq:HMCSub})-(\ref{eq:HMCMC}). Then, for any given $\tilde{\epsilon} \geq 0$, there exists $n_{\mathcal{H}}=\min_{\tau} |\mathcal{H}_{\tau}|,t^{\prime}>0$, such that $\|Q^{t}-Q_{\mathcal{E}}^t\|_{\infty} \leq \tilde{\epsilon}$ $\forall t\geq t^{\prime}$.
\end{theorem}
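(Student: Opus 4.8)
The plan is to split $\|Q^t - Q_{\mathcal{E}}^t\|_\infty$ into an HMC sampling error and a matrix-completion error, and then propagate the combined bound across time steps using the $\gamma$-contraction of the Bellman operator, exactly as in Theorem~\ref{thm:converge}. To this end I would introduce the intermediate \emph{full HMC update} $\widehat{Q}^{t+1}_{\mathrm{H}}(s,a) = r(s,a) + \frac{\gamma}{|\mathcal{H}_t|}\sum_{s'\in\mathcal{H}_t}\max_a Q^t(s',a)$ defined on \emph{all} of $\mathcal{S}\times\mathcal{A}$ (i.e. the quantity in \eqref{eq:HMCSub} extended past $\Omega_t$), and write, by the triangle inequality,
\begin{align*}
\|Q^{t+1} - Q^{t+1}_{\mathcal{E}}\|_\infty \leq \|Q^{t+1} - \widehat{Q}^{t+1}_{\mathrm{H}}\|_\infty + \|\widehat{Q}^{t+1}_{\mathrm{H}} - Q^{t+1}_{\mathcal{E}}\|_\infty.
\end{align*}
The first term is the error introduced by recovering the full matrix via \eqref{eq:HMCMC} from the entries observed on $\Omega_t$, and the second compares the HMC sample mean against the exhaustive expectation.

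For the second term I would reuse the decomposition from the proof of Theorem~\ref{thm:converge}:
\begin{align*}
\|\widehat{Q}^{t+1}_{\mathrm{H}} - Q^{t+1}_{\mathcal{E}}\|_\infty \leq \gamma \max_{(s,a)}\|\bar{Q}^t - \mathbb{E}_{\mathcal{P}}Q^t\| + \gamma\,\|\mathbb{E}_{\mathcal{P}}Q^t - \mathbb{E}_{\mathcal{P}}Q^t_{\mathcal{E}}\|.
\end{align*}
The first summand is the HMC concentration bound \eqref{eq:HMCb}, hence at most $\gamma c_0$; taking the effective sample size $n_{\mathcal{H}} = \min_\tau|\mathcal{H}_\tau|$ large makes $c_0$ small and, crucially, makes this hold \emph{uniformly} over all time steps. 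The second summand is nonexpansive because $\max_a Q(s,a)$ is $1$-Lipschitz in $Q$ under $\|\cdot\|_\infty$ and expectation is an average, giving $\gamma\|Q^t - Q^t_{\mathcal{E}}\|_\infty$, as in \eqref{eq:contrc}.

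For the first (matrix-completion) term I would invoke a stable nuclear-norm recovery guarantee \citep{candes2010matrix, chen2018harnessing}. Since $\Omega_t$ is generated by sampling each state-action pair independently with probability $p$ and the target matrix is approximately rank-$r$ (its top $r$ singular values capturing $99\%$ of the nuclear norm), with high probability the minimizer of \eqref{eq:HMCMC} obeys $\|Q^{t+1} - \widehat{Q}^{t+1}_{\mathrm{H}}\|_\infty \leq \delta_{\mathrm{MC}}$, where $\delta_{\mathrm{MC}}$ decreases as $p$ grows, provided $p \gtrsim r(|\mathcal{S}|+|\mathcal{A}|)\,\mathrm{polylog}(|\mathcal{S}||\mathcal{A}|)/(|\mathcal{S}||\mathcal{A}|)$. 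Writing $e_t = \|Q^t - Q^t_{\mathcal{E}}\|_\infty$ and combining the three bounds yields the linear recursion $e_{t+1} \leq \delta_{\mathrm{MC}} + \gamma c_0 + \gamma e_t$; since $\gamma\in[0,1)$, iterating gives $\limsup_t e_t \leq (\delta_{\mathrm{MC}} + \gamma c_0)/(1-\gamma)$. Choosing $n_{\mathcal{H}}$ large (small $c_0$) and $p$ large (small $\delta_{\mathrm{MC}}$) makes the limit at most $\tilde\epsilon$, so some $t'$ satisfies $e_t \leq \tilde\epsilon$ for all $t\geq t'$.

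The hard part will be the matrix-completion step. Standard exact-recovery theorems assume the matrix is \emph{exactly} low-rank and incoherent, whereas the $Q$ iterates here are only approximately low-rank, so I would need a robust recovery bound delivering bounded rather than zero reconstruction error. The real subtlety is uniformity in $t$: the recovered matrix $Q^{t+1}$ is fed back into the next HMC update, so I must verify that the iterates remain approximately low-rank and sufficiently incoherent at every step in order for the recovery guarantee, and hence the constant $\delta_{\mathrm{MC}}$, to persist throughout. Once that invariant is secured, the $\gamma$-contraction prevents the per-step errors from accumulating and the recursion above closes the argument.
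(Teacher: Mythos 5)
Your proposal is sound in structure but proceeds by a genuinely different route than the paper. The paper's proof is a sandwich argument through the optimal $Q^*$: it invokes Theorem~\ref{thm:converge} to place $\widehat{Q}^t$ within $\epsilon$ of $Q^*$, uses the standard almost-sure convergence of exhaustive-sampling $Q$-learning to place $Q^t_{\mathcal{E}}$ within $\epsilon$ of $Q^*$, concludes $\|\widehat{Q}^t - Q^t_{\mathcal{E}}\|_\infty \leq 2\epsilon$ by the triangle inequality, and then applies a single stability result for nuclear-norm matrix completion (Theorem~4 of \citealp{chen2018harnessing}, restated as \eqref{eq:MCconv}) to transfer this to the reconstructed matrix, yielding $\|Q^t - Q^t_{\mathcal{E}}\|_\infty \leq 2\delta\epsilon$. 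You instead set up the per-step recursion $e_{t+1} \leq \delta_{\mathrm{MC}} + \gamma c_0 + \gamma e_t$ and close it with the geometric series, giving $\limsup_t e_t \leq (\delta_{\mathrm{MC}} + \gamma c_0)/(1-\gamma)$. What your route buys is a correct treatment of the feedback loop that the paper's proof quietly sidesteps: Theorem~\ref{thm:converge} is proved for the pure HMC update \eqref{eq:HMCUpdate} \emph{without} matrix completion in the loop, yet the paper applies its conclusion to the $\widehat{Q}^t$ generated inside Algorithm~\ref{alg:HMCQAlg}, where each HMC update consumes a previously reconstructed (hence perturbed) iterate; your recursion absorbs that reconstruction error into $e_t$ and lets the $\gamma$-contraction control its accumulation, which is the logically cleaner way to handle it, and it also makes the final error's dependence on the sampling and completion tolerances explicit. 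What the paper's route buys is brevity and wholesale reuse of Theorem~\ref{thm:converge}. Both arguments rest on the same two external ingredients -- an HMC concentration bound (the paper's \eqref{eq:HMCb}, your $c_0$) and a stable low-rank recovery guarantee -- and both leave unverified the invariant you explicitly flag: that the iterates remain approximately low-rank and incoherent at every step so that the completion constant ($\delta$ in the paper, $\delta_{\mathrm{MC}}$ for you) persists uniformly in $t$. Naming that gap rather than burying it is a point in your favor; securing it would strengthen the paper's proof as well as yours.
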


\begin{proof} (\textit{sketch})
Due to boundedness under matrix completion we notice that error between $Q$ functions updated according to Hamiltonian $Q$-Learning and exhaustive sampling can be upper bounded using summation of  (\textit{i}) error between updated $\widehat{Q}^t$ and optimal function $Q^*$ and (\textit{ii}) error between updated function $Q_{\mathcal{E}}^t$ under exhaustive sampling and optimal function $Q^*$. Proof follows from upper bounding first term using matrix completion boundedness results and second term using Theorem \ref{thm:converge}. Please refer Supplementary Material
for a detailed proof of this theorem. 
\end{proof}

Finally we provide guarantees on the sampling complexity of Hamiltonian $Q$-Learning algorithm.
\begin{theorem}{\bf{(Sampling complexity of Hamiltonian $Q$-Learning)}}
\label{thm:sampleMat}
Let $\mathcal{D}_s$, $\mathcal{D}_a$ be the dimension of state space and action space, respectively. Consider the Hamiltonian $Q$-Learning algorithm presented in Algorithm \ref{alg:HMCQAlg}. Then, under a suitable matrix completion method, the $Q$ function converges to the family of $\epsilon$-optimal $Q$ functions with $\widetilde{O}\left(\epsilon^{-(\mathcal{D}_s+\mathcal{D}_a+2)}\right)$ number of samples.
\end{theorem}

\begin{proof} (\textit{sketch})
Here we briefly state the key steps of our proof. Let $T_{\epsilon}$ be the time step such that learned $Q$ function under Hamiltonian $Q$-Learning is $\epsilon$ optimal. Then number of samples required by Hamiltonian $Q$-Learning to learn an $\epsilon$ optimal $Q$ function can be given as $\sum_{t=1}^{T_{\epsilon}} |\Omega_t || \mathcal{H}_t |.$ We first prove results on the sample size $|\Omega_t |$ required to bound the error incurred due to matrix completion. Then we prove results on the sample size $|\Omega_t |$ required to bound the error incurred by approximating the expectation of next state using HMC samples. Final result follows from combining aforementioned results with convergence and boundedness results obtained in Theorem \ref{thm:converge} and \ref{thm:convergeMat}. A detailed proof of Theorem~\ref{thm:sampleMat} is given in Supplementary Material.
\end{proof}
%

\section{Experiments}
We illustrate convergence and sample efficiency of Hamiltonian $Q$-Learning using a high-dimensional system and four benchmark control tasks. Recall that when $Q$ function is Lipschitz convergence in Frobenius norm implies convergence in infinity norm; therefore, we used the Frobenius norm of the difference between the learned $Q$ function and optimal $Q^*$ to illustrate that Hamiltonian $Q$-Learning converges to at $\epsilon$-optimal $Q$ function.
\subsection{Empirical Evaluation for a High-Dimensional System} 
\paragraph{Experimental setup for a double pendulum on a cart:} 
%
By letting $x,\dot{x}$ denote the position and velocity of the cart and $\theta_1, \theta_2, \dot{\theta}_1, \dot{\theta}_2$ denote the joint angles and angular velocities of the poles, we define the 6-dimensional state of the cart-pole system as: $s = (x, \dot{x}, \theta_1, \dot{\theta}_1, \theta_2, \dot{\theta}_2)$ where $x\in [-2.4,2.4]$, $\dot{x}\in [-3.5,3.5]$, and $\theta_i \in [-\pi,\pi]$, $\dot{\theta}_i\in [-3.0,3.0]$ for $i=1,2$. Also, we define the range of the scalar action as $a\in [-10,10]$. Then each state space dimension is discretized into 5 distinct values and the action space into 10 distinct values. This leads to a $Q$ matrix of size $15625\times 10$. We consider that the probabilistic state transition is governed by (\ref{eq:stateTrans}) with a $\Sigma$ which ensures that the range of the state space along direction $i$ approximately equals to $6\sqrt{\Sigma_i}$. To stabilize the pendulum to an upright position, we define the reward function as $r(s,a)=\cos^4(15\theta_1)+\cos^4(15\theta_2)$. After initializing the $Q$ matrix using randomly chosen values from $[0,2]$, we sample state-action pairs with probability $p=0.2$ at each iteration. Please refer Supplementary Material for additional details.
%
\begin{figure*}[t!]
    \centering
    \subfloat[
    \scriptsize{$Q$ Function Convergence}
    ]{{\includegraphics[width=0.27\textwidth]{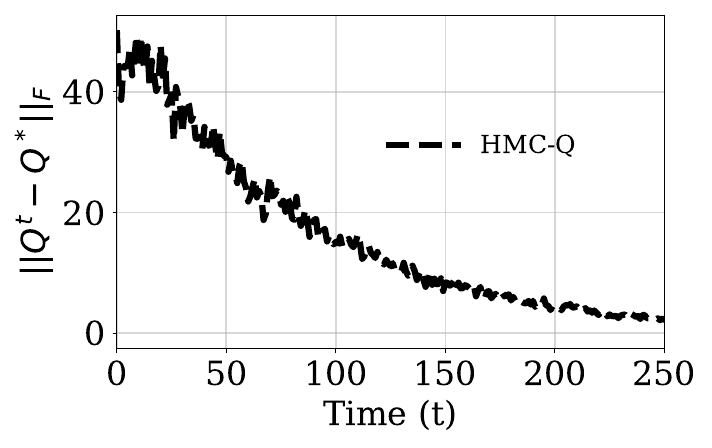}}}%
    \subfloat[
    \scriptsize{Exhaustive Sampling}
    ]
    {{\includegraphics[width=0.22\textwidth]{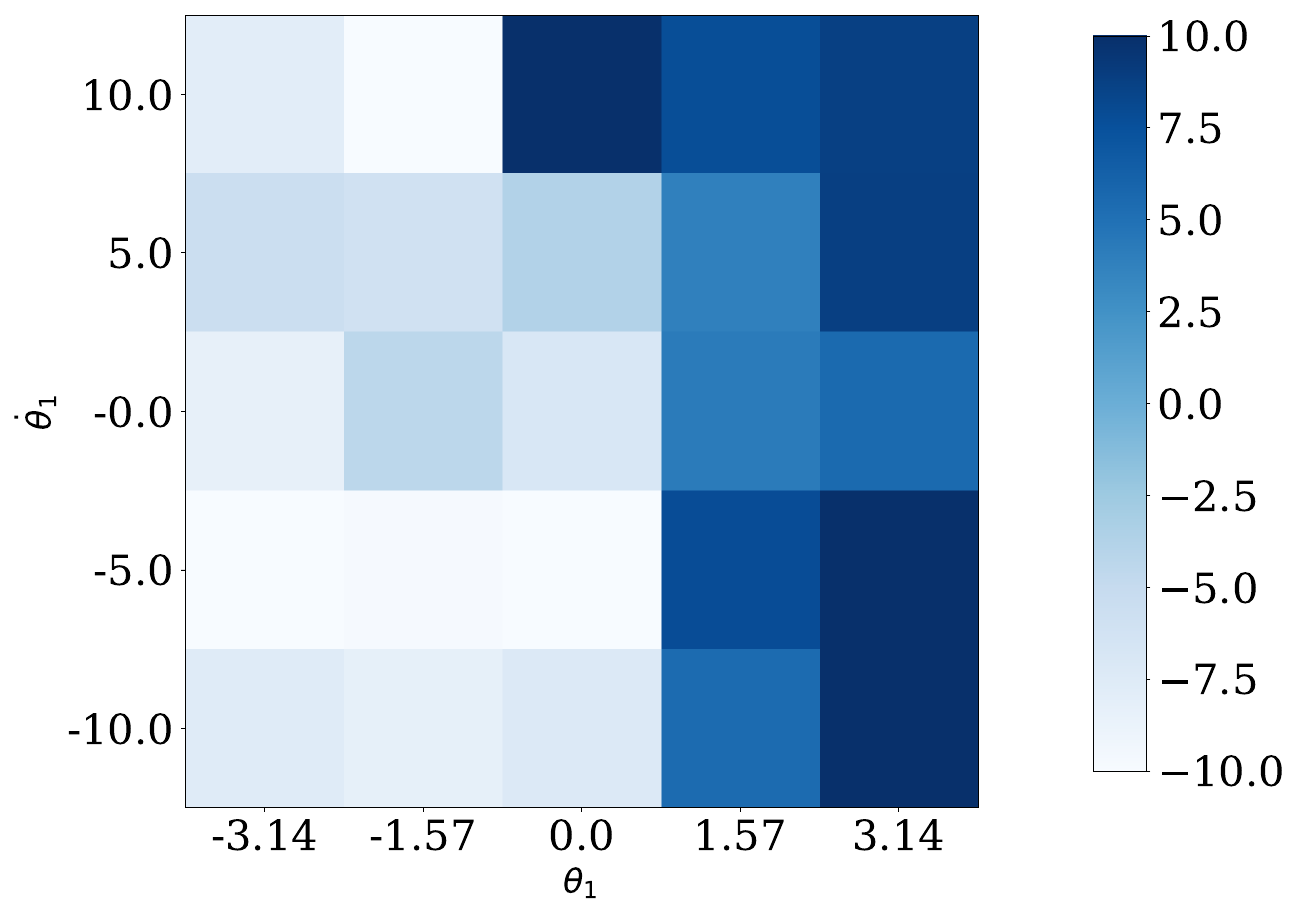}}}%
    \subfloat[
    \scriptsize{HMC Sampling}
    ]{{\includegraphics[width=0.22\textwidth]{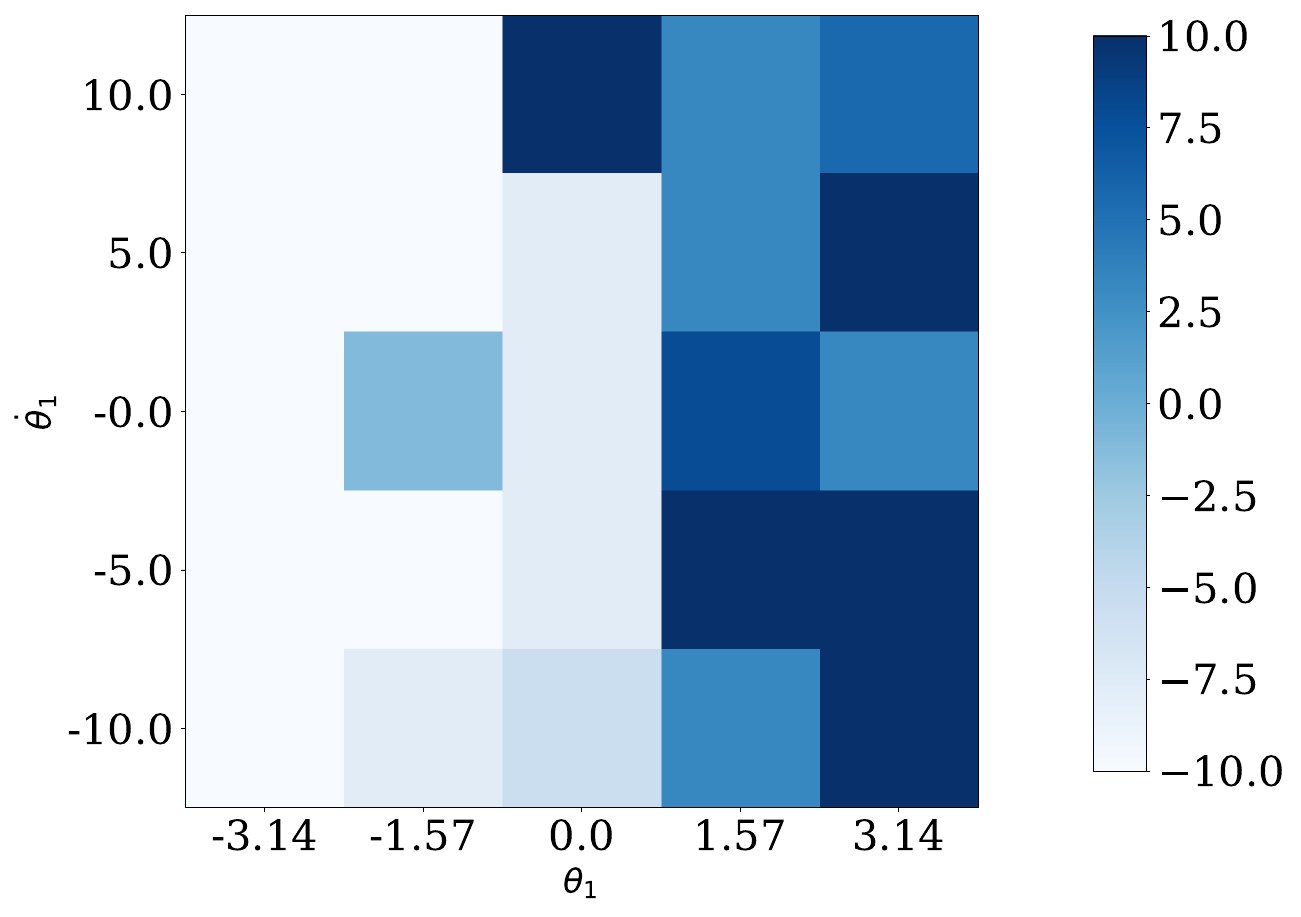}}}%
    \subfloat[
    \scriptsize{Sample efficiency}
    ]{{\includegraphics[width=0.28\textwidth]{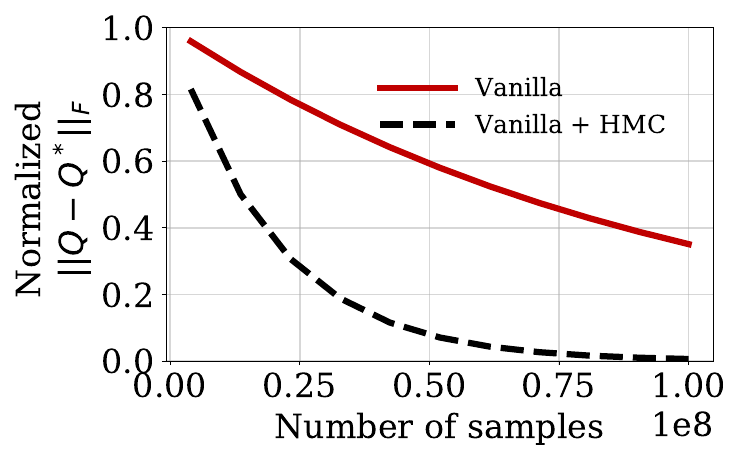}}}%
    \caption{\small{Figure \ref{fig:cartpole}(a) illustrates convergence of the $Q$ function learned via Hamiltonian $Q$-Learning to an $\epsilon$-optimal $Q$ function. Figure \ref{fig:cartpole}(b) and \ref{fig:cartpole}(c) show policy heat maps for $Q$-Learning with exhaustive sampling and Hamiltonian $Q$-Learning, respectively $(x=-1.2, \dot{x}=1.75, \theta_2=\pi/4,\dot{\theta}_2=1.5)$. Figure \ref{fig:cartpole}(d) shows the change in the normalized value of the Frobenius norm with the number of samples, for both exhaustive sampling and Hamiltonian $Q$-Learning for vanilla $Q$-Learning.}}%
    \label{fig:cartpole}%
\end{figure*}
\begin{figure*}[h!]
    \centering
    \includegraphics[width=0.85\textwidth]{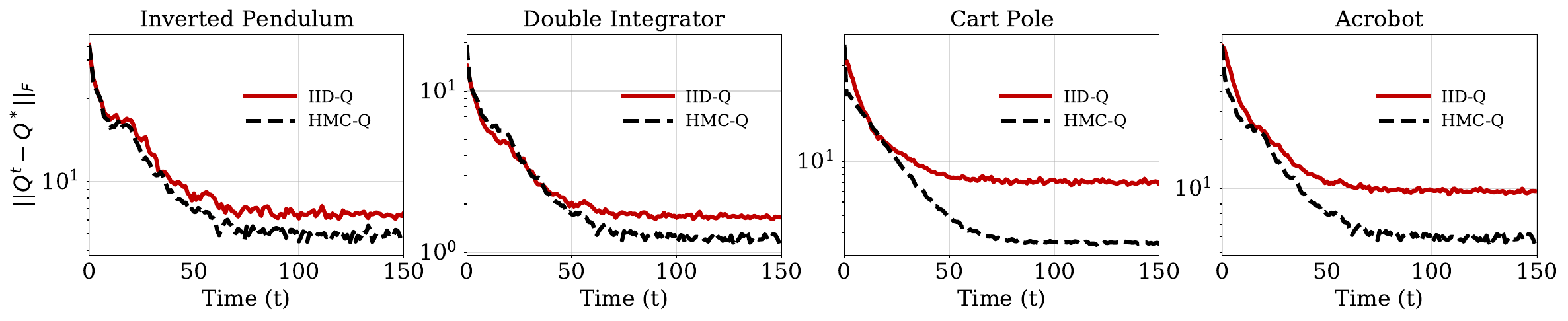}
    \vspace{-0.5em}
    \caption{\small{ A comparison of convergence of $Q$ function with Hamiltonian $Q$-Learning and $Q$-Learning with IID sampling.}}
    \label{fig:convergence}
    \vspace{1em}
\end{figure*}
\paragraph{Results:}
Figure~\ref{fig:cartpole}(a) shows the change in the Frobenius norm of the difference between the learned $Q$ function and optimal $Q^*$, thereby illustrating that Hamiltonian $Q$-Learning converges to an $\epsilon$ optimal $Q$ function. Note that under exhaustive sampling we use 15625 samples for each update. However, Hamiltonian $Q$-Learning uses only 200 samples for each update. As it is difficult to visualize policy heat maps for a 6-dimensional state space, we show results for the first two dimensions (i.e., $\theta_1$ and $\dot{\theta}_1$) while keeping the rest fixed (i.e., $\theta_2=0$, $\dot{\theta}_2=0$, $x=-1.2$, and $\dot{x}=3.5$). The heat maps shown in Figures \ref{fig:cartpole}(b) and \ref{fig:cartpole}(c) illustrate that the policy heat map for Hamiltonian $Q$-Learning is close to the one from $Q$-Learning with exhaustive sampling. We also show that the sample efficiency of $Q$-Learning can be significantly improved by incorporating Hamiltonian $Q$-Learning. Figure \ref{fig:cartpole}(d) shows how normalized Frobenius norm of the difference, i.e., Frobenius norm of the difference normalized by its maximum value, between the learned $Q$ function and the optimal $Q^*$ varies with increase in the number of samples. The solid red line shows the accuracy for exhaustive sampling and the dashed black line shows the same for Hamiltonian $Q$-Learning. These results show that Hamiltonian $Q$-Learning converges to an $\epsilon$ optimal $Q$ function with significantly fewer samples than exhaustive sampling.

\subsection{Empirical Evaluation for Low Dimensional Systems}
%
\paragraph{Experimental setup:} Here we investigate the applicability of Hamiltonian $Q$-Learning in low dimensional spaces where IID samples are available, and compare its performance against state-of-the-art algorithms on four benchmark control tasks (inverted pendulum, double integrator, cartpole, and acrobot). Among these four control tasks, the dynamics of inverted pendulum and double integrator evolve on a 2-dimensional state space, whereas cartpole and acrobot are defined on a 4-dimensional state space.  We discretize each state space dimension of inverted pendulum and double integrator into 25 distinct values, and each state space dimension of cartpole and acrobot into 5 distinct values. The action variable associated with all four control tasks is scalar, and we discretize each action space into 10 distinct values. This leads to a $Q$ matrix of size $625\times 10$. Please refer Supplementary Material for additional details about the experimental setup.
\begin{figure*}[!t]
    \centering
    \includegraphics[width=0.85\textwidth]{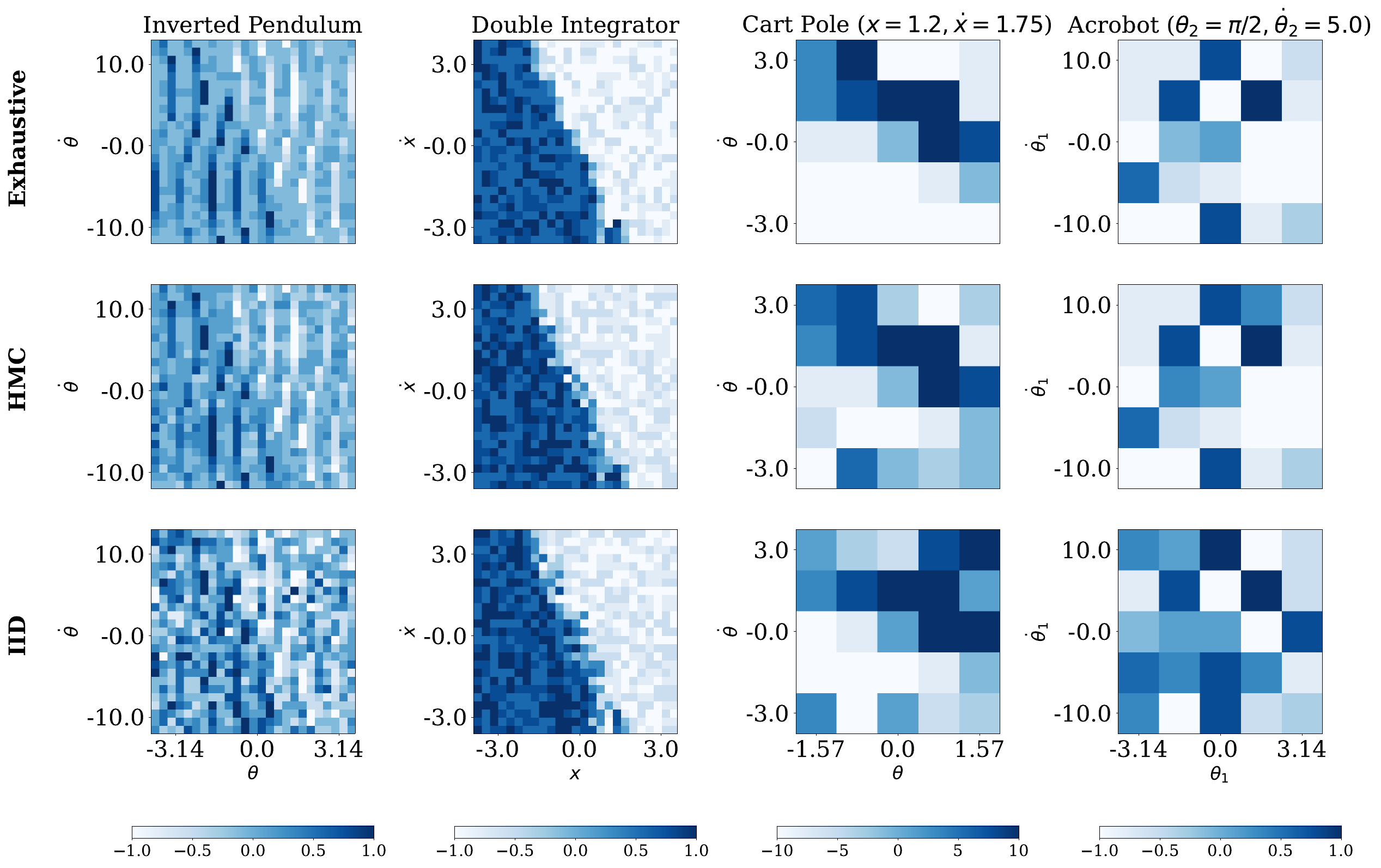}
    \caption{\small{Policy heatmaps for $Q$-Learning with exhaustive sampling, Hamiltonian $Q$-Leaning and IID sampling. The color in each cell corresponds to the value of optimal action at the corresponding state.}}
    \label{fig:heatmap}
\end{figure*}
\begin{figure*}[!h]
    \centering
    \includegraphics[width=0.85\textwidth]{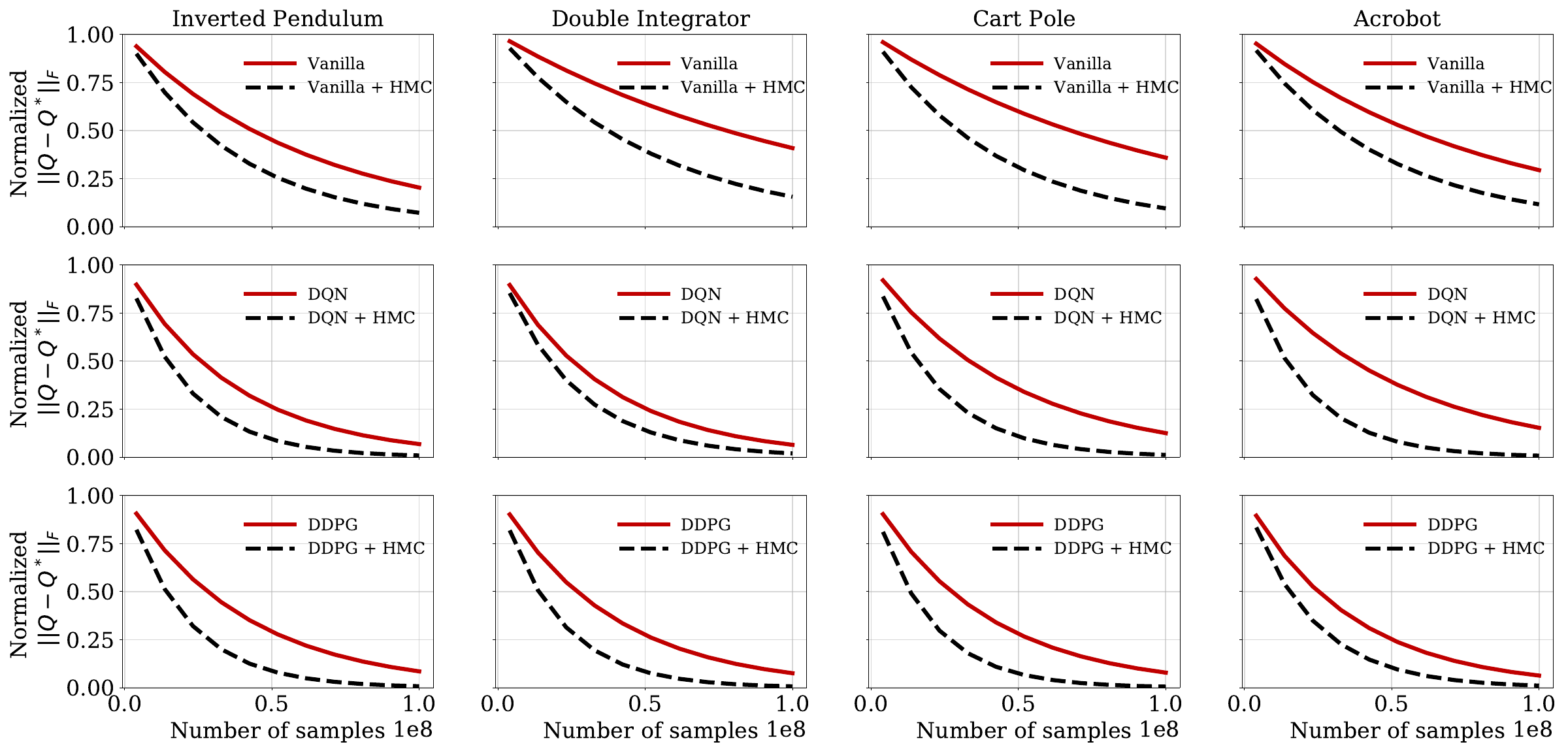}
    \caption{\small{Normalized mean square error, i.e. mean square error divided by it maximum, vs number of samples of $Q$ function with exhaustive sampling and HMC sampling for vanilla $Q$-Learning, DQN and DDPG. Red solid curve corresponds to HMC sampling and back dotted curve corresponds to HMC sampling.}}
    \label{fig:Sample}
    \vspace{1.0em}
\end{figure*}
\paragraph{Results:} Figure~\ref{fig:convergence} shows that Frobenius norm of the difference between the learned $Q$ function and optimal $Q^*$ can achieve a much lower value when HMC samples are used instead of IID samples. This illustrates that Hamiltonian $Q$-Learning achieves better convergence than $Q$-Learning with IID sampling. Note that, under exhaustive sampling we use 625 samples for each update, whereas learning with IID sampling and Hamiltonian $Q$-Learning require only 100 samples for each update. Figure \ref{fig:heatmap} shows policy heatmaps for $Q$-Learning with exhaustive sampling, Hamiltonian $Q$-Learning and $Q$-Learning with IID sampling. Our results show that the policy heatmaps associated from Hamitonian $Q$-Learning are closer to policy heatmaps obtained from $Q$-Learning with exhaustive sampling. Figure \ref{fig:Sample} illustrates how normalized Frobenius norm of the difference between the learned $Q$ function and the optimal $Q^*$ varies with increase in the number of samples. The solid red lines correspond to exhaustive sampling and the dashed black lines correspond to Hamiltonian $Q$-Learning. These results show that Hamiltonian $Q$-Learning can achieve the same level of accuracy with significantly fewer samples.
%
%
%
%
%
\section{Discussion and Conclusion}
In this paper we have introduced \textit{Hamiltonian Q-Learning}, a new model-free RL framework that can be utilized to obtain optimal policies in high-dimensional spaces, where obtaining IID samples is impractical.  
We show, both theoretically and empirically, that the proposed approach can learn accurate estimates of the optimal $Q$ function with much less numbr of samples compared to exhaustive sampling. Further, we illustrated that Hamiltonian Q-Learning can be used to improve sample efficiency of state-of-the-art algorithms in low dimensional spaces also. 
By building upon this aspect, future works will investigate how HMC sampling based methods can improve sample efficiency in multi-agent Q-learning, a system naturally very high-dimensions, with agents coupled through both action and reward.

\newpage
{
\small
\bibliographystyle{IEEEtran}
\bibliography{Manuscript_HMC}
}

\clearpage
\pagenumbering{arabic}
\renewcommand*{\thepage}{A\arabic{page}}
\setcounter{page}{1}
\appendix
\setcounter{equation}{0}
\renewcommand{\theequation}{S.\arabic{equation}}

\textbf{\LARGE Supplementary Material}

\section{Convergence and Boundedness Results}
We proceed to prove theorem by stating convergence properties for HMC as follows. In the initial sampling stage, starting from the initial position Markov chain converges towards to the typical set. In the next stage Markov chain quickly traverse the typical set and improves the estimate by removing the bias. In the last stage Markov chain refine the exploration of typical the typical set provide improved estimates. The number of samples taken during the last stage is referred as effective sample size. 
\subsection{Proof of Theorem \ref{thm:converge}} \label{sec:proofofthHMC}
\setcounter{theorem}{0}
\begin{theorem}\label{thm:convergeapp}
Let $\EuScript{T}$ be an optimality operator under HMC given as $(\EuScript{T}Q)(s^{\prime},a^{\prime}) = r(s^{\prime},a^{\prime}) + \frac{\gamma}{|\mathcal{H}|}\sum_{s\in \mathcal{H}}\max _{a}Q(s,a), \:\:\forall (s^{\prime},a^{\prime})\in \mathcal{S}\times \mathcal{A},$ where $\mathcal{H}$ is a subset of next states sampled using HMC from the target distribution given in (\ref{eq:targetDis}). Then, under update rule (\ref{eq:HMCUpdate}) and for any given $\epsilon \geq 0$, there exists $n_{\mathcal{H}},t^{\prime}>0$ such that $\|Q^{t}-Q^*\|_{\infty}\leq \epsilon$ $\forall t\geq t^{\prime}$.
\end{theorem}
\begin{proofoftheorem}
\normalfont
 Let $\bar{Q}^{t}(s,a)=\frac{1}{n_{\mathcal{H}}}\max_a Q^t(s,a), \forall (s,a)\in \mathcal{S}\times \mathcal{A}.$ Here we consider $n_{\mathcal{H}}$ to be the effective number of samples. Let $\mathbb{E}_{\mathcal{P}}Q^t,\mathbf{Var}_{\mathcal{P}}Q^t$ be the expectation and covariance of $Q^t$ with respect to the target distribution. From Central Limit Theorem for HMC we have
 \begin{align*}
\bar{Q}^{t}\sim\mathcal{N}\left(\mathbb{E}_{\mathcal{P}}Q^t,\sqrt{\frac{\mathbf{Var}_{\mathcal{P}}Q^t}{n_{\mathcal{H}}}}\right).
\end{align*}
Since $Q$ function does not decay fast we provide a proof for the case where $Q^t$ is $C$-Lipschitz. 
From Theorem 6.5 in \cite{holmes2014curvature} we have that, there exists a $c_0>0$ such that
\begin{align}
||\bar{Q}^t-\mathbb{E}_{\mathcal{P}}Q^t||\leq c_0.\label{eq:HMCb}
\end{align}
Recall that Bellman optimality operator $\mathcal{T}$ is a contraction mapping. Thus from triangle inequality we have
\begin{align*}
  \Big |\Big | \EuScript{T}Q_1-\EuScript{T}Q_2\Big |\Big |_{\infty}\leq \max_{s^{\prime},a^{\prime}}   \Big |\Big | r(s^{\prime},a^{\prime})+ \frac{\gamma}{|\mathcal{H}_1|}\sum_{s\in \mathcal{S}}\max _{a}Q_1(s,a)\\
  -r(s^{\prime},a^{\prime})-\frac{\gamma}{|\mathcal{H}_2|}\sum_{s\in \mathcal{S}}\max _{a}Q_2(s,a)\Big |\Big |\\
  \leq \max_{s^{\prime},a^{\prime}}\Big |\Big | \frac{\gamma}{|\mathcal{H}_1|}\sum_{s\in \mathcal{S}}\max _{a}Q_1(s,a)-\frac{\gamma}{|\mathcal{H}_2|}\sum_{s\in \mathcal{S}}\max _{a}Q_2(s,a)\Big |\Big |
\end{align*}
Let $|\mathcal{H}_1|=|\mathcal{H}_2|=n_{\mathcal{H}}.$ Then using triangle inequality we have
\begin{align*}
 \Big |\Big | \EuScript{T}Q_1-\EuScript{T}Q_2\Big |\Big |_{\infty}\leq \max_{s^{\prime},a^{\prime}}  \gamma \left[\Big |\Big | \bar{Q}_1-\mathbb{E}_{\mathcal{P}}Q_1\Big |\Big |+ \Big |\Big |\bar{Q}_2-\mathbb{E}_{\mathcal{P}}Q_2\Big |\Big |\right]+\max_{s^{\prime},a^{\prime}}  \gamma\Big |\Big |\mathbb{E}_{\mathcal{P}}Q_1-\mathbb{E}_{\mathcal{P}}Q_2\Big |\Big |
\end{align*}
Since $Q$ function almost surely converge under exhaustive sampling we have
\begin{align}
\max_{s^{\prime},a^{\prime}}  \gamma\Big |\Big |\mathbb{E}_{\mathcal{P}}Q_1-\mathbb{E}_{\mathcal{P}}Q_2\Big |\Big |\leq \gamma \Big |\Big |Q_1-Q_2\Big |\Big |_{\infty}\label{eq:contrc}
\end{align}
From (\ref{eq:HMCb}) and (\ref{eq:contrc}) we have after $t$ time steps
\begin{align*}
  \Big |\Big | \EuScript{T}Q_1-\EuScript{T}Q_2\Big |\Big |_{\infty}\leq 2c_0+\gamma \Big |\Big |Q_1-Q_2\Big |\Big |_{\infty} 
\end{align*}
Let $R_{max}$ and $R_{min}$ be the maximum and minimum reward values. Then we have that
\begin{align*}
\Big |\Big |Q_1-Q_2\Big |\Big |_{\infty} \leq \frac{\gamma}{1-\gamma}R_{max}-R_{min}.   
\end{align*}
Thus for any $\epsilon\geq$ by choosing a $\gamma$ such there exists a $t^{\prime}$ such that $\forall t\geq t^{\prime}$ 
\begin{align*}
\|Q^{t}-Q^*\|_{\infty}\leq \epsilon 
\end{align*}
This concludes the proof of Theorem \ref{thm:converge}.
\qedsymbol \end{proofoftheorem}
\subsection{Proof of Theorem \ref{thm:convergeMat}} \label{sec:proofofthMat}
\begin{theorem}
Let $Q^{t+1}_{\mathcal{E}}(s_t,a_t)=r(s_t,a_t)+\gamma\sum_{s\in \mathcal{S}}\mathbb{P}(s|s_t,a_t)\max _{a}Q_{\mathcal{E}}^t(s,a), \forall (s_t,a_t)\in \mathcal{S}\times \mathcal{A}$ be the update rule under exhaustive sampling, and $Q^t$ be the $Q$ function updated according to Hamiltonian $Q$-Learning, i.e. by (\ref{eq:HMCSub})-(\ref{eq:HMCMC}). Then, for any given $\tilde{\epsilon} \geq 0$, there exists $n_{\mathcal{H}},t^{\prime}>0$, such that $\|Q^{t}-Q_{\mathcal{E}}^t\|_{\infty} \leq \tilde{\epsilon}$ $\forall t\geq t^{\prime}$.
\end{theorem}
\begin{proofoftheorem}
\normalfont
Note that at each time step we attempt to recover the matrix $Q^{t}_{\mathcal{E}},$ i.e., $Q$ function time $t$ under exhaustive sampling though a matrix completion method starting from $\widehat{Q}^{t}$, which is the $Q$ updated function at time $t$ using Hamiltonian $Q$-Learning.  From Theorem 4 in \cite{chen2018harnessing} we have that $\forall t\geq t^{\prime}$ there exists some constant $\delta>0$ such that when the updated $Q$ function a $\widehat{Q}^{t}$ satisfy 
\begin{align*}
\Big |\Big |\widehat{Q}^{t}-Q^{t}_{\EuScript{E}}\Big |\Big |_{\infty}\leq c
\end{align*}
where $c$ is some positive constant then reconstructed (completed) matrix $Q^{t}$ satiesfies
\begin{align}
\Big | \Big |Q^{t}-Q^{t}_{\EuScript{E}}\Big |\Big |_{\infty}\leq \delta \Big |\widehat{Q}^{t}-Q^{t}_{\EuScript{E}}\Big |\Big |_{\infty}\label{eq:MCconv}
\end{align}
for some $\delta>0.$ This implies that when the initial matrix used for matrix completion is sufficiently close to the matrix we are trying to recover matrix completion iterations converge to a global optimum. 
From the result of Theorem \ref{thm:converge} we have for any given $\epsilon\geq 0$, there exists $n_{\EuScript{H}},t^{\prime}>0$ such that $\forall t\geq t^{\prime}$
\begin{align}
\Big | \Big | \widehat{Q}^t-Q^* \Big | \Big |\leq {\epsilon}\label{eq:Gconv}
\end{align}
Recall that under the update equation  $Q^{t+1}_{\mathcal{E}}(s_t,a_t)=r(s_t,a_t)+\gamma\sum_{s\in \mathcal{S}}\max _{a}Q_{\mathcal{E}}^t(s,a), \forall (s_t,a_t)\in \mathcal{S}\times \mathcal{A}$ we have that $Q_{\mathcal{E}}$ almost surely converge to the optimal $Q^*.$ Thus there exists a $t^{\dagger}$ such that $\forall t\geq t^{\dagger}$
\begin{align*}
\Big | \Big | Q^t_{\EuScript{E}}-Q^* \Big | \Big |\leq {\epsilon}
\end{align*}
Let $t^{\ddagger}=\max\{{t^{\dagger},t^{\prime}}\}.$ Then from triangle inequality we have that
\begin{align*}
\Big | \Big | \widehat{Q}^t-Q^t_{\EuScript{E}} \Big | \Big |\leq \Big | \Big | \widehat{Q}^t-Q^* \Big | \Big |+\Big | \Big | Q^t_{\EuScript{E}}-Q^* \Big | \Big |\leq 2\epsilon.
\end{align*} Thus from (\ref{eq:MCconv}) we have that
\begin{align*}
\Big | \Big |Q^{t}-Q^{t}_{\EuScript{E}}\Big |\Big |_{\infty}\leq 2\delta\epsilon
\end{align*}

This concludes the proof of Theorem \ref{thm:convergeMat}.
\qedsymbol  \end{proofoftheorem}
\section{Sampling Complexity}\label{sec:SampleComplex}
In this section we provide theoretical results on sampling complexity of Hamiltonian $Q$-Learning. For brevity of notation we define $\EuScript{M}Q(s)=\max_a Q(s,a).$ Note that we have the following regularity conditions on the MDP studied in this paper.

{\bf{Regularity Conditions}}
\begin{enumerate}
    \item Spaces $\mathcal{S}$ and $\mathcal{A}$ (state space and action space) are compact subsets of $\mathds{R}^{\mathcal{D}_s}$ and $\mathds{R}^{\mathcal{D}_a}$ respectively.
    \item All the rewards are bounded such that $r(s,a)\in [R_{\min},R_{\max}],$ for all $(s,a)\in \mathcal{S}\times \mathcal{A}.$
    \item The optimal $Q^*$ is $C$-Lipschitz such that 
    \begin{align*}
    \Big | Q^*(s,a)-Q^*(s^{\prime},a^{\prime})\Big |\leq C\left( ||s-s^{\prime}||_{F}+||a-a^{\prime}||_{F}\right)
    \end{align*}
\end{enumerate}

Now we prove some useful lemmas for proving sampling complexity of Hamiltonian $Q$-Learning
\begin{lemma}\label{lem:NumMatCom}
For some constant $c_1$, if
\begin{align*}
|\Omega_t | \geq c_1\frac{\max\Big \{ |\mathcal{S}|^2,|\mathcal{A}|^2\Big \}|\mathcal{S}||\mathcal{A}|\mathcal{D}_s\mathcal{D}_a}{\log \left(\mathcal{D}_s+\mathcal{D}_a\right)}
\end{align*}
with $\Big |\Big | \widehat{Q}^t(s,a)-Q^*(s,a)\Big |\Big |_{\infty}\leq \epsilon$ then there exists a constant $c_2$ such that
\begin{align*}
\Big |\Big | {Q}^t(s,a)-Q^*(s,a)\Big |\Big |_{\infty}\leq c_2\epsilon
\end{align*}
\end{lemma}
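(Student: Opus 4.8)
The plan is to treat this lemma as a direct consequence of a stable (noisy) low-rank matrix completion guarantee, in the same spirit as the proof of Theorem~\ref{thm:convergeMat}. The observed values $\widehat{Q}^t$ restricted to the sampled index set $\Omega_t$ are viewed as a perturbed measurement of the (approximately) low-rank matrix $Q^*$, where the entry-wise perturbation is controlled by the hypothesis $\|\widehat{Q}^t - Q^*\|_\infty \le \epsilon$. The nuclear-norm minimization program (\ref{eq:HMCMC}) that produces $Q^t$ is then shown to recover $Q^*$ up to a multiplicative constant factor of this perturbation, giving the claimed bound with $c_2$ playing the role of the stability constant $\delta$ appearing in (\ref{eq:MCconv}).

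First I would fix the low-rank model: under the working assumption that the $Q$-matrix admits a low-rank factorization, I would let $r$ denote the rank of $Q^*$ and argue that $r \lesssim \mathcal{D}_s \mathcal{D}_a$, since the state-action value function lives on a product space of intrinsic dimension $\mathcal{D}_s + \mathcal{D}_a$ and its tabulation inherits a factor structure controlled by this product. This identification is precisely what lets the rank term $\mathcal{D}_s \mathcal{D}_a$ enter the sample-complexity threshold. I would also record the standard incoherence condition on the singular vectors of $Q^*$, which any exact or stable completion result requires.

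Next I would verify that the stated lower bound on $|\Omega_t|$ meets the sampling-density requirement of the cited recovery result (Theorem~4 in \citep{chen2018harnessing}). Writing $n = \max\{|\mathcal{S}|, |\mathcal{A}|\}$, the requirement is of the familiar form $|\Omega_t| \gtrsim n r \cdot \mathrm{polylog}$; the displayed threshold, with numerator $\max\{|\mathcal{S}|^2,|\mathcal{A}|^2\}|\mathcal{S}||\mathcal{A}|\mathcal{D}_s\mathcal{D}_a$ and the $\log(\mathcal{D}_s + \mathcal{D}_a)$ factor, is chosen through the free constant $c_1$ so that the preconditions of that theorem hold, placing the program (\ref{eq:HMCMC}) in its stable-recovery regime. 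Under these conditions the theorem supplies a constant $\delta > 0$ with $\|Q^t - Q^*\|_\infty \le \delta \, \|\EuScript{J}_{\Omega_t}(\widehat{Q}^t) - \EuScript{J}_{\Omega_t}(Q^*)\|_\infty$.

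Finally I would combine this stability estimate with the hypothesis: since $\|\EuScript{J}_{\Omega_t}(\widehat{Q}^t) - \EuScript{J}_{\Omega_t}(Q^*)\|_\infty \le \|\widehat{Q}^t - Q^*\|_\infty \le \epsilon$, setting $c_2 = \delta$ yields $\|Q^t - Q^*\|_\infty \le c_2 \epsilon$, which closes the argument. The main obstacle I anticipate is making the rank-to-dimension bound $r \lesssim \mathcal{D}_s \mathcal{D}_a$ and the matching of the specific cardinality expression rigorous: the guarantees of \citep{chen2018harnessing} hold \emph{with high probability} over the random draw of $\Omega_t$ and hinge on incoherence, so care is needed either to state the lemma as holding with high probability or to argue that the HMC-driven sampling together with the stated cardinality bound genuinely satisfies the sampling-model and incoherence hypotheses, rather than merely a counting inequality.
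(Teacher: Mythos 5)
Your proposal takes essentially the same route as the paper's proof: both reduce the lemma to Theorem 4 of \citep{chen2018harnessing} (stable completion under incoherence), verify that the stated size of $\Omega_t$ meets its sampling-density precondition, and conclude with the stability bound $\|Q^t - Q^*\|_\infty \le \delta\,\|\widehat{Q}^t - Q^*\|_\infty \le c_2\epsilon$. Where you differ is in the accounting of the threshold. You read the factor $\mathcal{D}_s\mathcal{D}_a$ as a bound on the rank of $Q^*$ and additionally assume well-behaved incoherence, leaving the remaining factors to be absorbed by the free constant $c_1$; both of these are unproven, as you yourself note. The paper needs neither: it takes $\mathcal{D}_s\mathcal{D}_a/\log(\mathcal{D}_s+\mathcal{D}_a)$ verbatim from the cited sampling requirement, bounds the incoherence--rank product by its worst case, $\phi_t^2 r_Q^2 \le \max\{|\mathcal{S}|^2,|\mathcal{A}|^2\}$ (immediate from the definition of $\phi$, since $\|T_U\mathbf{e}_i\|_F^2 \le 1$), and produces the $|\mathcal{S}||\mathcal{A}|$ factor by converting the observed fraction $p$ into a count via $|\Omega_t| \approx p\,|\mathcal{S}||\mathcal{A}|$. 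Because the lemma asserts only a sufficient condition, your (smaller) requirement is dominated by the displayed threshold, so your argument still closes; but the paper's worst-case-incoherence bookkeeping is what makes the stated formula come out with no rank or incoherence hypothesis at all. You are also right that the conclusion is inherently a high-probability statement over the random draw of $\Omega_t$: the paper concedes this inside the proof (probability at least $1 - C_2(\mathcal{D}_s+\mathcal{D}_a)^{-1}$) yet omits it from the lemma statement, so your caveat is well placed.
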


\begin{proofoflemma}
\normalfont Recall that in order to complete a low rank matrix using matrix estimation methods, the matrix can not be sparse. This condition can be formulated using the notion of incoherence. Let $Q_{}$ be a matrix of rank $r_Q$ with the singular value decomposition $Q=U\Sigma V^T.$ Let $T_Q$ be the orthogonal projection of $Q\in \mathds{R}^{|\mathcal{S}|\times |\mathcal{A}|}$ to its column space. Then incoherence parameter of $\phi(Q)$ can be give as
\begin{align*}
\phi(Q) = \max \Big \{\frac{|\mathcal{S}|}{r_Q}\max _{1\leq i\leq |\mathcal{S}|} ||T_U\mathbf{e}_i||_F^2 ,\frac{|\mathcal{A}|}{r_Q}\max _{1\leq i\leq |\mathcal{A}|} ||T_U\mathbf{e}_i||_F^2\Big \}
\end{align*}
where $\mathbf{e}_i$ are the standard basis vectors.
Recall that $Q^t$ is the matrix generated in matrix completion phase from $\widehat{Q}.$  
From Theorem 4 in \cite{chen2018harnessing} we have that for some constant $C_1$ if a fraction of $p$ elements are observed from the matrix such that
\begin{align*}
p\geq  C_1 \frac{\phi_t^2r_Q^2\mathcal{D}_s\mathcal{D}_a}{\log \left(\mathcal{D}_s+\mathcal{D}_a\right)}
\end{align*}
where $\phi_t$ is the coherence parameter of $Q^t$ then with probability at least $1-C_2(\mathcal{D}_s+\mathcal{D}_a)^{-1}$ for some constant $C_2$
with $\Big |\Big | \widehat{Q}^t(s,a)-Q^*(s,a)\Big |\Big |_{\infty}\leq \epsilon$ there exists a constant $c_2$ such that
\begin{align*}
\Big |\Big | {Q}^t(s,a)-Q^*(s,a)\Big |\Big |_{\infty}\leq c_2\epsilon
\end{align*}
Note that $p\approx \frac{|\Omega_t|}{|\mathcal{S}||\mathcal{A}|}.$ Further we have for some constant $c_3$
\begin{align*}
\frac{\phi_t^2r_Q^2\mathcal{D}_s\mathcal{D}_a}{\log \left(\mathcal{D}_s+\mathcal{D}_a\right)}=c_3\frac{\max\Big \{ |\mathcal{S}|^2,|\mathcal{A}|^2\Big \}\mathcal{D}_s\mathcal{D}_a}{\log \left(\mathcal{D}_s+\mathcal{D}_a\right)}
\end{align*}
Thus it follows that for some constant $c_1$ if 
\begin{align*}
|\Omega_t|=c_1\frac{\max\Big \{ |\mathcal{S}|^2,|\mathcal{A}|^2\Big \}|\mathcal{S}||\mathcal{A}|\mathcal{D}_s\mathcal{D}_a}{\log \left(\mathcal{D}_s+\mathcal{D}_a\right)}
\end{align*}
with $\Big |\Big | \widehat{Q}^t(s,a)-Q^*(s,a)\Big |\Big |_{\infty}\leq \epsilon$ then there exists a constant $c_2$ such that
\begin{align*}
\Big |\Big | {Q}^t(s,a)-Q^*(s,a)\Big |\Big |_{\infty}\leq c_2\epsilon
\end{align*}
This concludes the proof of Lemma \ref{lem:NumMatCom}. \qedsymbol
\end{proofoflemma}
\begin{lemma}\label{lem:bound}
Let $1-\xi$ be the spectral gap of Markov chain under Hamiltonian sampling where $\xi \in [0,1].$ Let $\Delta R=R_{\max}-R_{\min}$ be the maximum reward gap. Then $\forall (s^{\prime},a^{\prime})\in \mathcal{S}\times \mathcal{A}$ we have that
\begin{align*}
|\widehat{Q}(s^{\prime},a^{\prime})-Q^*(s^{\prime},a^{\prime})\Big |\leq \frac{\gamma^2}{1-\gamma}\Delta R+\sqrt{\frac{1+\xi}{1-\xi}\frac{2}{|\mathcal{H}|}\left(\frac{\gamma R_{\max}}{1-\gamma}\right)^2\log\left(\frac{2}{\delta}\right)}. 
\end{align*}
with at least probability $1-\delta.$
\end{lemma}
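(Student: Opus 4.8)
The plan is to write the one-step HMC error as a single expectation-estimation problem and split it into a deterministic bias and a random fluctuation. Using the update rule (\ref{eq:HMCSub}) together with the Bellman optimality characterization of $Q^*$, for a fixed pair $(s',a')$ we have
\begin{align*}
\widehat{Q}(s',a') - Q^*(s',a')
= \gamma\left(\frac{1}{|\mathcal{H}|}\sum_{s\in\mathcal{H}}\max_a Q(s,a) - \mathbb{E}_{\mathbb{P}}\!\left[\max_a Q^*\right]\right).
\end{align*}
Writing $V(s)=\max_a Q(s,a)$ and $V^*(s)=\max_a Q^*(s,a)$, I would insert the stationary (target-distribution) expectation $\mathbb{E}_{\mathcal{P}}[V]$ to obtain the decomposition
\begin{align*}
\widehat{Q}(s',a') - Q^*(s',a')
= \gamma\underbrace{\left(\frac{1}{|\mathcal{H}|}\sum_{s\in\mathcal{H}}V(s) - \mathbb{E}_{\mathcal{P}}[V]\right)}_{\text{fluctuation}}
+ \gamma\underbrace{\left(\mathbb{E}_{\mathcal{P}}[V] - \mathbb{E}_{\mathbb{P}}[V^*]\right)}_{\text{bias}},
\end{align*}
and then bound the two pieces separately, combining them with the triangle inequality at the end.

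For the fluctuation term I would use the boundedness of the value function. Since rewards lie in $[R_{\min},R_{\max}]$, the geometric sum of discounted rewards gives $\|Q\|_\infty\le R_{\max}/(1-\gamma)$, so the summand $\gamma V(s)$ takes values in an interval of magnitude at most $\gamma R_{\max}/(1-\gamma)$. The HMC samples form a reversible Markov chain whose stationary distribution is the target $\mathcal{P}$ and whose absolute spectral gap is $1-\xi$; hence the samples are \emph{not} independent and an IID Hoeffding bound is unavailable. Instead I would invoke a Hoeffding-type concentration inequality for reversible Markov chains, in which the effective variance is inflated by the mixing factor $(1+\xi)/(1-\xi)$. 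Setting the tail probability equal to $\delta$ and solving for the deviation yields exactly
\begin{align*}
\gamma\left|\frac{1}{|\mathcal{H}|}\sum_{s\in\mathcal{H}}V(s) - \mathbb{E}_{\mathcal{P}}[V]\right|
\le \sqrt{\frac{1+\xi}{1-\xi}\,\frac{2}{|\mathcal{H}|}\left(\frac{\gamma R_{\max}}{1-\gamma}\right)^2\log\!\left(\frac{2}{\delta}\right)}
\end{align*}
with probability at least $1-\delta$.

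For the bias term I would absorb the distributional mismatch into the gap between the current estimate and the optimum. Assuming the target distribution $\mathcal{P}$ reproduces the transition expectation up to the truncation constant guaranteed by (\ref{eq:probDis})--(\ref{eq:targetDisNormal}), the bias reduces to $\gamma\,|\mathbb{E}_{\mathbb{P}}[V-V^*]|\le\gamma\|Q-Q^*\|_\infty$. Since $Q$ is one Bellman backup removed from the fixed point, the contraction property of the Bellman operator gives $\|Q-Q^*\|_\infty\le\gamma\,\|Q'-Q^*\|_\infty$ for the preceding iterate $Q'$, and every admissible $Q$ function has range at most $\Delta R/(1-\gamma)$; together these yield the deterministic bound $\frac{\gamma^2}{1-\gamma}\Delta R$ on the bias.

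Combining the two bounds through the triangle inequality produces the stated inequality, valid with probability at least $1-\delta$. I expect the main obstacle to be the fluctuation term: because HMC generates correlated rather than IID samples, one must justify reversibility and a strictly positive spectral gap for the chain and then apply the correct Markov-chain concentration inequality so that the $(1+\xi)/(1-\xi)$ factor appears with the right constants; the bias term, by contrast, follows from elementary boundedness and contraction arguments.
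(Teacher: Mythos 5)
Your overall strategy mirrors the paper's: split the one-step error into a deterministic bias plus a sampling fluctuation, bound the fluctuation with a Hoeffding-type inequality for Markov chains with spectral gap $1-\xi$ (giving the $(1+\xi)/(1-\xi)$ inflation), and bound the bias deterministically via boundedness of the value functions. The only structural difference is where the triangle inequality is applied: the paper inserts the empirical mean $\frac{\gamma}{|\mathcal{H}|}\sum_{i}\max_a Q^*(s_i,a)$, so its concentration step acts on the \emph{fixed} function $\max_a Q^*(\cdot,a)$ and its deterministic term compares $Q$ with $Q^*$ at the sampled states; you insert $\gamma\,\mathbb{E}_{\mathcal{P}}[\max_a Q(\cdot,a)]$, so your concentration acts on the current value function and your deterministic term is a difference of expectations. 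Both splits are legitimate (yours even has the merit of making the $\mathcal{P}$-versus-$\mathbb{P}$ mismatch explicit, which the paper passes over silently), and your fluctuation bound reproduces the paper's second term with the same constants.

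The genuine gap is in your bias bound. To reach $\frac{\gamma^2}{1-\gamma}\Delta R$ you assert that ``$Q$ is one Bellman backup removed from the fixed point,'' so that contraction gives $\|Q-Q^*\|_{\infty}\leq \gamma\|Q'-Q^*\|_{\infty}$. In Hamiltonian $Q$-Learning the current iterate is \emph{not} an exact Bellman backup of anything: it is produced by the HMC-approximate update (\ref{eq:HMCSub}) over a random subset of state-action pairs followed by nuclear-norm matrix completion (\ref{eq:HMCMC}), so $Q\neq\mathcal{T}Q'$ and the contraction property of $\mathcal{T}$ cannot be invoked for it; indeed, controlling how far these inexact iterates sit from $Q^*$ is precisely what this lemma is later used to establish, so assuming it here is circular. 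Without that step your argument only yields $\gamma\|Q-Q^*\|_{\infty}\leq \frac{\gamma}{1-\gamma}\Delta R$, i.e., you lose a factor of $\gamma$ relative to the stated bound. The paper gets its $\gamma^2$ by a different (if also questionable) route: it bounds the pointwise differences at the sampled states using the claim $\max_a Q(s,a)\leq \frac{\gamma}{1-\gamma}R_{\max}$, i.e., a convention under which value functions lie in an interval of width $\frac{\gamma}{1-\gamma}\Delta R$; under the standard bound $\|Q\|_{\infty}\leq R_{\max}/(1-\gamma)$ that you use, one only gets width $\frac{\Delta R}{1-\gamma}$. To close your gap you must either justify that boundedness convention for the iterates or accept the weaker constant $\frac{\gamma}{1-\gamma}\Delta R$ in the first term.
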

\begin{proofoflemma}
\normalfont
Let $\widehat{Q}(s^{\prime},a^{\prime})=r(s^{\prime},a^{\prime})+\frac{\gamma}{|\mathcal{H}|}\sum_{s\in \mathcal{H}}\max_a Q(s,a).$ Recall that $\EuScript{M}Q(s)=\max_aQ(s,a).$ Then we have that $\widehat{Q}(s^{\prime},a^{\prime})=r(s^{\prime},a^{\prime})+\frac{\gamma}{|\mathcal
{H}|}\sum_{s\in \mathcal{H}}\EuScript{M}Q(s).$ Then it follows that
\begin{align}
|\widehat{Q}(s^{\prime},a^{\prime})-Q^*(s^{\prime},a^{\prime})\Big |&= \Big |r(s^{\prime},a^{\prime})+\frac{\gamma}{|\mathcal
{H}|}\sum_{s\in \mathcal{H}}\EuScript{M}Q(s)-r(s^{\prime},a^{\prime})-\gamma \mathbb{E}_{\mathcal{P}}\EuScript{M}Q^*(s)\Big |\nonumber \\
&=\Big |\frac{\gamma}{|\mathcal
{H}|}\sum_{i=1}^{|\mathcal{H}|}\EuScript{M}Q(s_i)-\gamma \mathbb{E}_{\mathcal{P}}\EuScript{M}Q^*(s)\Big |\nonumber \\
&= \Big |\frac{\gamma}{|\mathcal
{H}|}\sum_{i=1}^{|\mathcal{H}|}\EuScript{M}Q(s_i)-\frac{\gamma}{|\mathcal
{H}|}\sum_{i=1}^{|\mathcal{H}|}\EuScript{M}Q^*(s_i)\Big |\nonumber \\
&\:\:\:\:\:\:\:\:\:+\Big  |\frac{\gamma}{|\mathcal
{H}|}\sum_{i=1}^{|\mathcal{H}|}\EuScript{M}Q^*(s_i)-\gamma\mathbb{E}_{\mathcal{P}}\EuScript{M}Q^*(s)\Big |\label{eq:IntDecomp}
\end{align}
Recall that all the rewards are bounded such that $r(s,a)\in [R_{\min},R_{\max}],$ for all $(s,a)\in \mathcal{S}\times \mathcal{A}.$ Thus for all $ s,a $ we have that $\EuScript{M}Q(s)\leq \frac{\gamma}{1-\gamma}R_{\max}.$ Let $\Delta R=R_{\max}-R_{\min}.$ Then we have that
\begin{align}
\Big |\frac{\gamma}{|\mathcal
{H}|}\sum_{i=1}^{|\mathcal{H}|}\EuScript{M}Q(s_i)-\frac{\gamma}{|\mathcal
{H}|}\sum_{i=1}^{|\mathcal{H}|}\EuScript{M}Q^*(s_i)\Big |\leq \frac{\gamma^2}{1-\gamma}\Delta R.\label{eq:firstb}
\end{align}
Let $\xi\in [0,1]$ be a constant such that $1-\xi$ is the spectral gap of the Markov chain under Hamiltonian sampling. Then from \cite{fan2018hoeffding} we have that
\begin{align*}
\mathbb{P}\left(\frac{1}{|\mathcal
{H}|}\sum_{i=1}^{|\mathcal{H}|}\EuScript{M}Q^*(s_i)-\mathbb{E}_{\mathcal{P}}\EuScript{M}Q^*(s)\geq \vartheta \right)\leq \exp\left(-\frac{1-\xi}{1+\xi}\frac{|\mathcal{H}|\vartheta^2}{2R_{\max}^2}\left(\frac{1-\gamma}{\gamma}\right)^2\right)
\end{align*}
Let $\delta=\exp\left(-\frac{1-\xi}{1+\xi}\frac{|\mathcal{H}|\vartheta^2}{2R_{\max}^2}\left(\frac{1-\gamma}{\gamma}\right)^2\right).$ Then we have that
\begin{align*}
\vartheta=\sqrt{\frac{1+\xi}{1-\xi}\frac{2}{|\mathcal{H}|}\left(\frac{\gamma R_{\max}}{1-\gamma}\right)^2\log\left(\frac{2}{\delta}\right)}.
\end{align*}
Thus we see that
\begin{align}
\Big |\frac{1}{|\mathcal
{H}|}\sum_{i=1}^{|\mathcal{H}|}\EuScript{M}Q^*(s_i)-\mathbb{E}_{\mathcal{P}}\EuScript{M}Q^*(s) \Big | \leq \sqrt{\frac{1+\xi}{1-\xi}\frac{2}{|\mathcal{H}|}\left(\frac{\gamma R_{\max}}{1-\gamma}\right)^2\log\left(\frac{2}{\delta}\right)} \label{eq:secondB}
\end{align}
with at least probability $1-\delta.$ Thus it follows from equations (\ref{eq:IntDecomp}), (\ref{eq:firstb}) and (\ref{eq:secondB}) that
\begin{align*}
|\widehat{Q}(s^{\prime},a^{\prime})-Q^*(s^{\prime},a^{\prime})\Big |\leq \frac{\gamma^2}{1-\gamma}\Delta R+\sqrt{\frac{1+\xi}{1-\xi}\frac{2}{|\mathcal{H}|}\left(\frac{\gamma R_{\max}}{1-\gamma}\right)^2\log\left(\frac{2}{\delta}\right)}. 
\end{align*}
with at least probability $1-\delta.$ This concludes the proof of Lemma \ref{lem:bound}.  \qedsymbol 
\end{proofoflemma}
\begin{lemma}\label{lem:time Sum}
For all $(s,a)\in \mathcal{S}\times \mathcal{A}$ we have that
\begin{align*}
|{Q}^t(s,a)-Q^*(s,a)\Big |\leq 2c_1\frac{\gamma^2R_{\max}}{1-\gamma}
\end{align*}
with probability at least $1-\delta$
\end{lemma}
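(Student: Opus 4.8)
The plan is to chain together the two preceding lemmas: use Lemma~\ref{lem:bound} to control the per-step HMC estimation error $\|\widehat{Q}^t-Q^*\|_\infty$, and then invoke the matrix-completion guarantee of Lemma~\ref{lem:NumMatCom} to transfer this bound to the reconstructed matrix $Q^t$. The factor $2c_1$ in the claim reflects exactly this two-stage structure: a factor of $2$ arising when the bias and concentration terms of Lemma~\ref{lem:bound} are combined, and a single constant carried over from the matrix-completion step.

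First I would start from Lemma~\ref{lem:bound}, which gives, with probability at least $1-\delta$,
\begin{align*}
|\widehat{Q}^t(s^{\prime},a^{\prime})-Q^*(s^{\prime},a^{\prime})| \leq \frac{\gamma^2}{1-\gamma}\Delta R + \sqrt{\frac{1+\xi}{1-\xi}\frac{2}{|\mathcal{H}|}\left(\frac{\gamma R_{\max}}{1-\gamma}\right)^2\log\left(\frac{2}{\delta}\right)}.
\end{align*}
The next step is to choose the effective sample size $|\mathcal{H}|$ large enough that the concentration (second) term is at most $\frac{\gamma^2 R_{\max}}{1-\gamma}$; a direct computation shows this holds once $|\mathcal{H}| \geq \frac{1+\xi}{1-\xi}\frac{2}{\gamma^2}\log(2/\delta)$. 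Using $\Delta R = R_{\max}-R_{\min} \leq R_{\max}$ to bound the bias term, both contributions are then of order $\frac{\gamma^2 R_{\max}}{1-\gamma}$, so that $\|\widehat{Q}^t-Q^*\|_\infty \leq 2\frac{\gamma^2 R_{\max}}{1-\gamma}$ with probability at least $1-\delta$.

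Finally I would apply Lemma~\ref{lem:NumMatCom} with $\epsilon = 2\frac{\gamma^2 R_{\max}}{1-\gamma}$: provided $|\Omega_t|$ meets the sampling requirement stated there, the completed matrix satisfies $\|Q^t-Q^*\|_\infty \leq c_2\,\|\widehat{Q}^t-Q^*\|_\infty \leq 2c_2\frac{\gamma^2 R_{\max}}{1-\gamma}$. Relabeling the matrix-completion constant $c_2$ as $c_1$ then yields exactly the stated bound.

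The main obstacle I anticipate is the uniformity of the bound over all $(s,a)$: Lemma~\ref{lem:bound} is a pointwise concentration statement holding with probability $1-\delta$ for a fixed pair, so a simultaneous guarantee over the entire matrix requires either a union bound (which inflates the failure probability by roughly $|\mathcal{S}||\mathcal{A}|$, hence rescales $\delta$) or an explicit supremum-concentration argument for the HMC chain. A secondary subtlety is ensuring the hypotheses of both lemmas hold at once—$|\mathcal{H}|$ large enough for the concentration bound and $|\Omega_t|$ large enough for matrix completion—and that the spectral gap $1-\xi$ of the Hamiltonian chain stays bounded away from zero, so that the required $|\mathcal{H}|$ remains finite.
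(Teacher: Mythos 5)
Your proposal follows essentially the same route as the paper's proof: bound $\|\widehat{Q}^t-Q^*\|_\infty$ via Lemma~\ref{lem:bound}, choose $|\mathcal{H}_t|$ of order $\frac{1+\xi}{1-\xi}\frac{2}{\gamma^2}\log(\cdot)$ so the concentration term is dominated by $\frac{\gamma^2 R_{\max}}{1-\gamma}$, bound the bias term by $\Delta R \leq R_{\max}$, and transfer the result to the completed matrix $Q^t$ through the matrix-completion constant of Lemma~\ref{lem:NumMatCom}, yielding the $2c_1$ factor. The uniformity issue you flag is handled in the paper exactly as you anticipate: the log factor becomes $\log\left(\frac{2|\Omega_t|T}{\delta}\right)$ and each step holds with probability $1-\frac{\delta}{T}$, so a union bound over the horizon gives the stated $1-\delta$ guarantee.
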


\begin{proofoflemma}
\normalfont
From Lemma \ref{lem:bound} and \cite{shah2020sample} we have that for all $(s,a)\in \Omega_t$ 
\begin{align}
|\widehat{Q}^t(s,a)-Q^*(s,a)\Big |\leq \frac{\gamma^2}{1-\gamma}\Delta R+\sqrt{\frac{1+\xi}{1-\xi}\frac{2}{|\mathcal{H}_t|}\left(\frac{\gamma R_{\max}}{1-\gamma}\right)^2\log\left(\frac{2|\Omega_t|T}{\delta}\right)}. \label{eq:Hatbound}
\end{align}
with probability at least $1-\frac{\delta}{T}.$ Thus we have that
\begin{align*}
|{Q}^t(s,a)-Q^*(s,a)\Big |\leq c_1\frac{\gamma^2}{1-\gamma}\Delta R+c_1\sqrt{\frac{1+\xi}{1-\xi}\frac{2}{|\mathcal{H}_t|}\left(\frac{\gamma R_{\max}}{1-\gamma}\right)^2\log\left(\frac{2|\Omega_t|T}{\delta}\right)}.
\end{align*}
with probability at least $1-\frac{\delta}{T}.$ Fro all $1\leq t \leq T$ letting 
\begin{align*}
|\mathcal{H}_t|=\frac{1+\xi}{1-\xi}\frac{2}{\gamma^2}\log\left(\frac{2|\Omega_t|T}{\delta}\right)
\end{align*}
we obtain
\begin{align*}
\frac{\gamma^2}{1-\gamma} R_{\max} \geq  \sqrt{\frac{1+\xi}{1-\xi}\frac{2}{|\mathcal{H}_t|}\left(\frac{\gamma R_{\max}}{1-\gamma}\right)^2\log\left(\frac{2|\Omega_t|T}{\delta}\right)}.
\end{align*}
Thus we have,
\begin{align*}
|{Q}^t(s,a)-Q^*(s,a)\Big |\leq 2c_1\frac{\gamma^2R_{\max}}{1-\gamma}
\end{align*}
with probability at least $1-\delta.$ Recall that $\forall (s,a)\in \mathcal{S}\times \mathcal{A}$ we have $\EuScript{M}Q(s,a)\leq \frac{\gamma \Delta R}{1-\gamma}.$ Thus this also proves that
\begin{align*}
|{Q}^t(s,a)-Q^*(s,a)\Big |\leq 2c_1\gamma |{Q}^{t-1}(s,a)-Q^*(s,a)\Big |
\end{align*}
This concludes the proof of Lemma \ref{lem:time Sum}. \qedsymbol 
\end{proofoflemma}

Now we proceed to prove the main theorem for sampling complexity as follows.

\begin{theorem}\label{thm:samplecomplex}
Let $\mathcal{D}_s,\mathcal{D}_a$ be the dimension of state space and action space respectively. Consider the Hamiltonian $Q$-Learning algorithm presented in Algorithm \ref{alg:HMCQAlg}. Under a suitable matrix completion method sampling complexity of the algorithm, $Q$ function converge to the family of $\epsilon$-optimal $Q$ functions with  $\widetilde{O}\left(\epsilon^{-(\mathcal{D}_s+\mathcal{D}_a+2)}\right)$ number of samples.
\end{theorem}
\begin{proofoftheorem}
\normalfont
Note that sample complexity of Hamiltonian $Q$-Learning can be given as
\begin{align*}
\sum_{t=1}^{T_{\epsilon}}|\Omega_t||\mathcal{H}_t|\leq T_{\epsilon}|\Omega_{T_{\epsilon}}||\mathcal{H}_{T_{\epsilon}}|
\end{align*}
Let $\beta^t$ be the discretization parameter at time $t$ and $T_{\epsilon}=\frac{\log \left(\frac{\gamma R_{\max}}{(1-\gamma)\epsilon}\right)}{\log\left(\frac{1}{2\gamma c_1}\right)}$. Then from Lemmas \ref{lem:NumMatCom}, \ref{lem:bound} and \ref{lem:time Sum} it follows that
\begin{align*}
\sum_{t=1}^{T_{\epsilon}}|\Omega_t||\mathcal{H}_t| = \widetilde{O}\left(\frac{1}{\epsilon^{\mathcal{D}_s+\mathcal{D}_a+2}}\right)
\end{align*}
This concludes the proof of Theorem \ref{thm:samplecomplex}. \qedsymbol
\end{proofoftheorem}

\section{Additional Experimental Details for Benchmark Control Tasks}\label{sec:appcontrol}
In this section we provide additional details related to the experimental results presented in this paper. 

\subsection{Experimental Setup}
We consider the case that state transition is stochastic due to system noise arise from model uncertainties. Following the conventional approach we model these parameter uncertainties and external disturbances using a multivariate Gaussian perturbation \cite{maithripala2016geometric, madhushani2017feedback, mcallister2017data}. For all the control tasks we consider the dynamic equations given in \cite{shah2020sample}. For all simulations we take 100 HMC samples during the update phase. We use trajectory length $L=100$ and step size $\delta l=0.02.$ We randomly initialize the $Q$ matrix using values between 0 and 1. 
\paragraph{Inverted Pendulum} 
Let $\theta$, $\dot{\theta}$ be the angle of the pendulum, respectively. Then, by letting $a$ denote the input torque applied to the pendulum, its dynamics can be expressed as
\begin{equation}
    \ddot{\theta} - \sin\theta + \dot{\theta} - a = 0.\label{eq:pend}
\end{equation}

The state space associated with the pendulum is 2-dimensional ($\EuScript{D}_s=2$) and any state $s\in \mathcal{S}$ is given by $s=(\theta,\dot{\theta})$. We define the range of state space as $\theta\in [-\pi,\pi]$ and $\dot{\theta}\in [-10, 10]$. We consider action space to be a 1-dimensional ($\EuScript{D}_a=1$) space such that $a\in [-1,1]$. We discretize each dimension in state space into 25 values and the action space into 10 values. This forms a $Q$ matrix of dimension $625\times 10$. 

Also, we consider the noise co-variance of the Gaussian perturbation to be $\Sigma = \text{diag}[0.868,1.550]$.

Let $s_t=({\theta}_t,\dot{\theta}_t)$ and $a_t$ be the state and the action at time $t$. Then the state transition probability kernel and corresponding target distribution can be given using (\ref{eq:stateTrans}) and (\ref{eq:targetDisNormal}), respectively, with mean $\mathbf{\mu}(s_t,a_t) = ({\theta}_t+ \dot{\theta}_t\tau,\dot{\theta}_t+\ddot{\theta}_t\tau)$, where  $\tau$ is the discretizated time interval and $\ddot{\theta}_t$ can be obtained from (\ref{eq:pend}) by substituting $\theta_t,\dot{\theta}_t,a_t,$
and co-variance $\Sigma(s_t,a_t) = \Sigma$.

As our goal is to stabilize the pendulum to the upright position (i.e. to $\theta=0$) while minimizing the amount of applied torque, we consider the reward function as follows
\begin{align*}
 r(\theta, \dot{\theta},a) = -0.1a^2 + \exp (\cos\theta -1).
\end{align*}

\paragraph{Double Integrator}
By letting $x$, $\dot{x}$, and $a$ denote the position, velocity, and input torque, respectively, we can express the system dynamics as
\begin{align}
    \ddot{x} &= a\label{eq:doubInt}.
\end{align}

State space of the double integrator is 2-dimensional ($\EuScript{D}_s=2$) and any state $s\in \mathcal{S}$ is given as $s=(x,\dot{x}).$ We define the range of state space as $x\in [-3,3]$ and $\dot{\theta}\in [-3, 3]$. We consider action space to be a 1-dimensional ($\EuScript{D}_a=1$) space such that $a\in [-1,1]$. We discretize each dimension in state space into 25 values and action space into 10 values. This forms a $Q$ matrix of dimension $625\times 10$.

Here we consider the noise co-variance of the Gaussian perturbation to be $\Sigma = \text{diag}[0.848, 0.848]$.

Let $s_t=(x_t,\dot{x}_t)$ and $a_t$ be the state and the action at time $t$. Then the state transition probability kernel and corresponding target distribution can be given using (\ref{eq:stateTrans}) and (\ref{eq:targetDisNormal}), respectively, with mean $\mathbf{\mu}(s_t,a_t) = (x_t+ \dot{x}_t\tau,\dot{x}_t+\ddot{x}_t\tau)$, where $\tau$ is the discretizated time interval and $\ddot{x}_t$ can be obtained from (\ref{eq:doubInt}) by substituting $x_t,\dot{x}_t,a_t,$
and co-variance $\Sigma(s_t,a_t) = \Sigma$.

We define the reward function as the quadratic cost
\begin{align*}
 r(x, \dot{x},a) = -\frac{1}{2}\left(x^2+\dot{x}^2\right).
\end{align*}
\paragraph{Cartpole}
Let $\theta$, $\dot{\theta}$ be the angle and angular velocity of the pole, respectively. Similarly, let $x$, $\dot{x}$ be the position and linear velocity of the cart, respectively. Then, by letting $a$ denote the control force applied to the cart, the dynamics of cart-pole system \cite{6313077} can be expressed as
\begin{equation}
\begin{aligned}
l\left(\frac{4}{3}(m+M) - m\cos^2 \theta\right) \ddot{\theta} + (a+ml \dot{\theta}^2\sin \theta)\cos \theta - (m+M)g\sin \theta &= 0
\\
(m+M)\ddot{x} - ml\left(\dot{\theta}^2\sin \theta-\ddot{\theta}\cos\theta\right) - a &= 0
\end{aligned}
\label{eq:cartpole}
\end{equation}
where, $m$, $M$, $l$ and $g$ represent the mass of the pole, mass of the cart, length of the pole and the gravitational acceleration, respectively.

State space of the cart-pole system is 4-dimensional ($\EuScript{D}_s=4$) and any state $s\in \mathcal{S}$ is given by $s=(\theta,\dot{\theta},x,\dot{x})$. We define the range of state space as $\theta \in [-pi/2,\pi/2], \dot{\theta}\in [-3.0,3.0],x\in [-2.4,2.4]$ and $\dot{x}\in [-3.5,3.5]$. We consider action space to be a 1-dimensional ($\EuScript{D}_a=1$) space such that $a\in [-10,10]$. We discretize each dimension in state space into 5 values and action space into 10 values. This forms a $Q$ matrix of dimensions $625\times 10$.

Although the differential equations (\ref{eq:cartpole}) governing the dynamics of the pendulum on a cart system are deterministic, uncertainty of the parameters and external disturbances to the system causes the cart pole to deviate from the defined dynamics leading to a stochastic state transition. Here we consider the co-variance of the Gaussian perturbation to be $\Sigma = \text{diag}[0.641,0.848,0.759,0.917]$.

Let $s_t=(\theta_t,\dot{\theta}_t,x_t,\dot{x}_t)$ and $a_t$ be the state and the action at time $t$. Then the state transition probability kernel and corresponding target distribution can be given using (\ref{eq:stateTrans}) and (\ref{eq:targetDisNormal}), respectively, with mean $\mathbf{\mu}(s_t,a_t) = (\theta_t+ \dot{\theta}_t\tau,\dot{\theta}_t+\ddot{\theta}_t\tau,x_t+\dot{x}_t\tau,\dot{x}_t+\ddot{x}_t\tau)$, where $\tau$ is the discretizated time interval and $\ddot{\theta_t},\ddot{x}_t$ can be obtained from (\ref{eq:cartpole}) by substituting $\theta_t,\dot{\theta}_t,a_t,$
and co-variance $\Sigma(s_t,a_t) = \Sigma$.

Our simulation results use the following value for the system parameters - $m=0.1 kg$, $M=1 kg$, $l=0.5 m$ and $g=9.8 ms^{-2}$. The goal is stabilizing the pole in upright position. Thus we consider the reward function
\begin{align*}
    r(\theta,\dot{\theta}, x,\dot{x},a)= \cos^4(15\theta)
\end{align*}

\paragraph{Acrobot}
Let $\theta_1$, $\dot{\theta}_1$ be the angle and angular velocity of the first pole, respectively. Similarly, let $\theta_2$, $\dot{\theta}_2$ be the angle and angular velocity of the first pole, respectively. Then, by letting $a$ denote the control torque applied to the second joint, dynamics of the acrobot can be expressed as
\begin{equation}
\begin{aligned}
\ddot{\theta}_2& = \frac{a+\frac{D_2}{D_1}\phi_1-m_2l_1l_{c2}\dot{\theta}_1^2\sin\theta_2-\phi_2}{m_2(l_2^2+l_{c2}^2)-\frac{D_2^2}{D_1}}
\\
\ddot{\theta}_1& = -\frac{D_2\ddot{\theta_2}+\phi_1}{D_1},
\end{aligned}
\label{eq:acrobot}
\end{equation}
where,
\begin{align*}
D_1& = m_1(l_1^2+l_{c1}^2)+m_2(l_1^2+l_2^2+l_{c2}^2+2l_1l_{c2}\cos\theta_2)\\
D_2& = m_2(l_2^2+l_{c2}^2+l_1l_{c2}\cos\theta_2)\\
\phi_2& = m_2l_{c2}g\sin (\theta_1+\theta_2)\\
\phi_1& = -m_2l_{c2}\dot{\theta}_2(\dot{\theta}_2+2\dot{\theta}_1)\sin\theta_2+(m_1l_{c1}+m_2l_1)g\sin\theta_1+phi_2,
\end{align*}
and $m_1$, $m_2$, $l_1$ $l_2$ and $g$ represent the mass of the poles, length of the poles, and the gravitational acceleration, respectively. We have used $l_{c1}=l_1/2$ and $l_{c2}=l_2/2$. Moreover, our simulation results use the following value for the system parameters: $m_1=m_2=0.1 kg$, $l_1=l_2=0.1 m$ and $g=9.8 ms^{-2}$.

State space of the acrobot is 4-dimensional ($\EuScript{D}_s=4$) and any state $s\in \mathcal{S}$ is given by $s = (\theta_1, \dot{\theta}_1, \theta_2, \dot{\theta}_2)$. We define the range of state space as $\theta_i \in [-\pi,\pi]$ and $\dot{\theta}_i\in [-10.0,10.0]$, $i=1,2$. We consider action space to be a 1-dimensional ($\EuScript{D}_a=1$) space such that $a\in [-1,1]$. We discretize each dimension in state space into 5 values and action space into 10 values. This forms a $Q$ matrix of dimensions $625\times 10$.

To incorporate the effects from uncertainty in the parameters and external disturbances to the system, we consider the co-variance of the Gaussian perturbation to the system be $\Sigma = \text{diag}[0.686,1.550,0.686,1.550]$.

Let $s_t=({\theta_1}_t,\dot{\theta_1}_t,{\theta_2}_t,\dot{\theta_2}_t)$ and $a_t$ be the state and the action at time $t$. Then the state transition probability kernel and corresponding target distribution can be given by (\ref{eq:stateTrans}) and (\ref{eq:targetDisNormal}), respectively, with mean $\mathbf{\mu}(s_t,a_t) = ({\theta_1}_t+ \dot{\theta_1}_t\tau,\dot{\theta_1}_t+\ddot{\theta_1}_t\tau,{\theta_2}_t+ \dot{\theta_2}_t\tau,\dot{\theta_2}_t+\ddot{\theta_2}_t\tau,)$, where  $\tau$ is the discretizated time interval and $\ddot{\theta_1}_t, \ddot{\theta_2}_t$ can be obtained from (\ref{eq:acrobot}) by substituting $\theta_t,\dot{\theta}_t,a_t,$ and co-variance $\Sigma(s_t,a_t) = \Sigma$.

As the objective is to stabilize the acrobot to the upright position, we define the reward function as
\begin{align*}
    r(\theta_1,\dot{\theta}_1, \theta_2,\dot{\theta}_2,a)= \exp(-\cos\theta_1-1)+\exp(-\cos(\theta_1+\theta_2)-1).
\end{align*}

\subsection{Comparison with Deep RL Algorithms}
We provided results combining HMC sampling with benchmark Deep RL algorithms DQN and DDPG. We used the same network architecture of DQN and DDPG presented in the  original papers \cite{mnih2015human,lillicrap2015continuous}. To train the networks, we used the Adam optimizer \cite{kingma2014adam} with learning rate $1e^{-5}$, discount coefficient $\gamma=0.99$, and batchsize 32. For all results provided in this paper we used following hyper parameters. Also, we set the number of steps between target network update to 10,000.

\end{document}